\documentclass[11pt]{article}
\usepackage[normalem]{ulem}

\usepackage[pagenumbers]{cvpr} 

\definecolor{cvprblue}{rgb}{0.21,0.49,0.74}
\usepackage[pagebackref,breaklinks,colorlinks,allcolors=cvprblue]{hyperref}
\usepackage{amsmath,amsthm,amssymb,amsfonts, fancyhdr, xcolor, comment, graphicx, environ,mathtools,bm,bbm}
\usepackage{algorithm}
\usepackage{algpseudocode}
\usepackage{ stmaryrd }
\usepackage{caption}
\usepackage{subcaption}

\usepackage{csquotes}
\usepackage{multirow}
\usepackage{booktabs}
\usepackage{dsfont}

\newcommand{\R}{\mathbb{R}}
\newcommand{\G}{\mathcal{G}}

\usepackage[subtle]{savetrees}

\newtheorem{definition}{Definition}
\newtheorem{theorem}{Theorem}[section]
\newtheorem{corollary}{Corollary}[theorem]

\newtheorem{proposition}[theorem]{Proposition}
\newtheorem{remark}{Remark}
\usepackage{color}   
\hypersetup{
    colorlinks=true, 
    linktoc=all,    
    linkcolor=blue,
}

\usepackage{soul}

\newcommand\blfootnote[1]{%
  \begingroup
  \renewcommand\thefootnote{}\footnote{#1}%
  \addtocounter{footnote}{-1}%
  \endgroup
}

\def\paperID{*****} 
\def\confName{CVPR}
\def\confYear{2026}

\title{QuadSync: Quadrifocal Tensor Synchronization via Tucker Decomposition}

\author{ Daniel Miao \thanks{School of Mathematics, University of Minnesota (\href{mailto:miao0022@umn.edu}{miao0022@umn.edu}, \href{mailto:lerman@umn.edu}{lerman@umn.edu}) }
\hspace{1cm}
Gilad Lerman\footnotemark[1]
\hspace{1cm}
Joe Kileel\thanks{Department of Mathematics and Oden Institute for Computational Engineering and Sciences, University of Texas at Austin (\href{mailto:jkileel@math.utexas.edu}{jkileel@math.utexas.edu})}}

\begin{document}
\maketitle
\bibliographystyle{unsrt}

\begin{abstract}
In structure from motion, 
quadrifocal tensors capture more information than their pairwise counterparts (essential matrices), yet they have often been thought of as impractical and only of 
theoretical interest. In this work, we challenge such beliefs by providing a new framework to recover $n$ cameras from the corresponding collection of quadrifocal tensors.  We form the block quadrifocal tensor and show that it admits a Tucker decomposition whose factor matrices are the stacked camera matrices, and which thus has a multilinear rank of (4,~4,~4,~4) independent of $n$. We develop the first synchronization algorithm for quadrifocal tensors, using Tucker decomposition, alternating direction method of multipliers, and iteratively reweighted least squares.  We further establish relationships between the block quadrifocal, trifocal, and bifocal tensors, and introduce an algorithm that jointly synchronizes these three entities. Numerical experiments demonstrate the effectiveness of our methods on modern datasets, indicating the potential and importance of using higher-order information in synchronization.
\end{abstract}
\blfootnote{Complementary code can be found at \url{https://github.com/dmiao153/QuadSync}.}
\section{Introduction}\label{sec:intro}
Structure from motion (SfM) has been one of the most active research areas in 3D computer vision. SfM concerns the reconstruction of a 3D model of a scene from a set of 2D images taken from different views. Typically the pipeline of SfM includes feature detection and matching, relative pose estimation, synchronization, and reconstruction. While many works have explored synchronization based on pairwise measurements, few have investigated the possibility of incorporating higher order measurements into the SfM pipeline, even though higher order measurements may have the capability to improve reconstruction quality through stronger constraints and replicated information. One obstacle has been that higher-order measurements are less understood compared to their counterparts; there has also not been sufficient dedicated work on computing them. The authors of \cite{thirthala2012radial} state that, ``\textit{Currently the quadrifocal and mixed trifocal tensors are useful only from a theoretical stand-point}''. In this work, we lay the theoretical groundwork for synchronizing quadrifocal tensors by characterizing properties of a collection of quadrifocal tensors, while also providing two promising practical algorithms for advancing current SfM systems using quadrifocal tensors. 

\subsection{Relevant Previous Work}

In SfM, the classical synchronization methods are incremental methods, such as Bundler \cite{snavely2006photo} and COLMAP \cite{schonberger2016structure}.  Cameras are registered sequentially while the 3D scene is reconstructed using bundle adjustment. 
The order in which cameras are registered may greatly impact the reconstruction quality due to error accumulation.  Also, bundle adjustment becomes computationally expensive with many cameras. 

 Global synchronization methods have been proposed whereby cameras are processed simultaneously.  Global methods can be split into different categories. 
 Some directly process a collection of fundamental or essential matrices. This is usually achieved by enforcing algebraic constraints like low rankness \cite{sengupta2017new, kasten2019gpsfm, kasten2019algebraic, geifman2020averaging} or properties of the fundamental matrix \cite{madhavan2025recovery}. Others extract the relative rotations and relative locations from  essential matrices. One can use the structure of $SO(3)$ to obtain a global estimate of the camera orientations \cite{govindu2004lie, chatterjee2013efficient, hartley2013rotation,chatterjee2017robust,shi2020message,arie2012global}. Then, one can retrieve the global location configuration using the rotation estimates \cite{wilson2014robust, ozyesil2015stable, goldstein2016shapefit, zhuang2018baseline, li2025cycle}. Alternatively, one may synchronize over $SE(3)$ and retrieve rotations and translations simultaneously \cite{arrigoni2016spectral, rosen2019se, cucuringu2012sensor, briales2017cartan}. 
 Modern global-based  pipelines include GLOMAP \cite{pan2024global} and Theia \cite{sweeney2015theia}. 
 
 Recently, there have been efforts toward investigating the use of higher-order information in synchronization, like trifocal tensors and higher-order cycles \cite{li2024efficient, miao2024tensor, duncan2025higher}. These works point to possible benefits of using higher-order information in SfM. 

Separately, there have been attempts to utilize quadrifocal relationships in multiview geometry. \cite{hruby2023four} studies the minimal problem for estimating a special quadrifocal tensor wherein one of the views is a radial camera. \cite{comport2007accurate, comport2010real} exploit quadrifocal relationships to develop a trajectory estimation problem, yet do not directly operate on quadrifocal tensors. To the best of our knowledge, no prior works have focused on the global synchronization of a collection of quadrifocal tensors.

\subsection{Our Contributions}
Our contributions can be summarized as follows.
\begin{enumerate}
    \item We develop new and strong theory: a system of algebraic constraints for a set of quadrifocal tensors.  It is expressed as a low rank condition on a block tensor that we introduce.
    \item We develop the first global synchronization algorithm for quadrifocal tensors, and a joint synchronization scheme that combines this with pairwise and triplewise measurements. 
    \item We demonstrate the promise of our algorithms through numerical experiments and show the effectiveness and practicality of using higher-order measurements in synchronization.
\end{enumerate}

\section{Background}\label{sec:background}

Given a collection of $n$ images $I_1,...,I_n$ of a 3D scene, standard pinhole cameras associated with each image are modeled as $3\times 4$ matrices $P_i = K_i R_i [I_{3\times 3} \mid -t_i]$, where $K_i \in \R^{3\times 3}$ is the calibration matrix, $R_i \in SO(3)$ is the orientation, and $t_i \in \R^3$ is the location of the camera in the global coordinate system. 
Let $\mathbf{X} \in \R^4$ be a world point in homogeneous coordinates, and $\mathbf{x_i} = P_i \mathbf{X}$ its projection into the image plane of $I_i$ in homogeneous coordinates. The fundamental matrix $F_{ij}$ relates points $\mathbf{x_i}$ and $\mathbf{x_j}$ that correspond to the same world point in the image planes $I_i$ and $I_j$ respectively via
\begin{equation*}
    \mathbf{x_i}^TF_{ij}\mathbf{x_j} = 0.
\end{equation*} 
Analogously, the trifocal tensor $T_{ijk} = \{T_1,T_2,T_3\} \in \R^{3\times 3 \times 3}$ relates corresponding points across three images $I_i,I_j,I_k$, such that 
\begin{equation*}
    [\mathbf{x_j}]_\times \left( \sum_{w=1}^3 (\mathbf{x_i})_w T_w \right) [\mathbf{x_k}]_\times = 0_{3\times 3},
\end{equation*}  
where $[ \cdot ]_{\times}$ denotes the corresponding skew-symmetric matrix. 

We also recall how global synchronization for fundamental matrices and trifocal tensors can be conducted. For synchronizing a collection of fundamental matrices, \cite{sengupta2017new} introduces the \textit{multiview matrix of fundamentals} $\mathcal{F}^n$ (or the \textit{$n$-view fundamental matrix} or the \textit{block fundamental matrix}). It is formed by concatenating the $\binom{n}{2}$ fundamental matrices associated with $n$ cameras into a $3n \times 3n$ matrix $\mathcal{F}^n$, thought of as an $n \times n$ block matrix where the $ij^{th}$ block is $F_{ij} \in \mathbb{R}^{3n \times 3n}$. With a suitable set of nonzero scales on $F_{ij}$, then $\mathcal{F}^n$ admits a factorization $\mathcal{F}^n = A+A^T$ where rank$(A) = 3$ for not all collinear cameras, so that $\operatorname{rank}(\mathcal{F}^n) = 6$. 
The constraints on $\mathcal{F}^n$ were further explored in \cite{kasten2019algebraic}, \cite{kasten2019gpsfm}, \cite{geifman2020averaging}. 
The full characterization of the $n$-view fundamental matrix from \cite{kasten2019gpsfm} is summarized by the following theorem. 
\begin{theorem}[\cite{kasten2019gpsfm}]
    A given matrix $\mathcal{F}^n\in \mathbb{R}^{3n \times 3n}$ is consistent as an $n$-view fundamental matrix with some set of $n$ cameras whose centers are not all collinear if and only if:
    \begin{enumerate}
        \item[1.] $rank(\mathcal{F}^n) = 6$ and $\mathcal{F}^n$ has exactly $3$ positive and $3$ negative eigenvalues. 
        \item[2.] $rank(\mathcal{F}^n_{i,:}) = 3$ for all $i=1,...,n$, where $\mathcal{F}^n_{i,:}$ denotes the $i^{th}$ row block of $\mathcal{F}^n$.
    \end{enumerate}
\end{theorem}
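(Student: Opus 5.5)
Both directions run through the standard explicit formula for the fundamental matrix. Write each camera as $P_i$ and recall $F_{ij} \simeq [e_{ij}]_\times P_j P_i^{+}$, or better the determinantal form: $(F_{ij})_{ab}$ is, up to sign, the $4\times4$ determinant built from two rows of $P_i$ (complementary to $a$) and two rows of $P_j$ (complementary to $b$). Equivalently, with $C_i$ the center of $P_i$ and $Q_i$ the $4\times 3$ matrix of the dual/Plücker map,
\[
F_{ij} = Q_i^T\, S_{ij}\, Q_j ,
\]
where $S_{ij}$ is antisymmetric. The convenient normalization (as in \cite{sengupta2017new}) gives $F_{ij} = V_i^T (t_i - t_j)_\times$-type expressions. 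Concretely, after absorbing $K_i R_i$, set $U_i = (K_iR_i)^{-T}$. Then $F_{ij} = U_i^T\,[t_i - t_j]_\times\,U_j$, which gives
\[
F_{ij} = U_i^T [t_i]_\times U_j - U_i^T [t_j]_\times U_j .
\]

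\emph{Necessity.} Stack $U=[U_1;\dots;U_n]\in\R^{3n\times 3}$ and $W=[[t_1]_\times U_1;\dots;[t_n]_\times U_n]$. The identity above reads $\mathcal F^n = W^T U \cdot$(blockwise), i.e. $\mathcal F^n = U W^T + W U^T$ up to sign conventions, using antisymmetry of $[t]_\times$. The diagonal blocks vanish automatically, consistent with $F_{ii}=0$. So $\mathcal F^n = A + A^T$ with $A = UW^T$ of rank $\le 3$. The next step is to show $[U\ W]$ has rank $6$ when the centers are not collinear. If $Ux+Wy=0$, then $U_i x + [t_i]_\times U_i y = 0$ for all $i$. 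Setting $z_i = U_i y$ and using translation normalization, I would argue this forces $y$ to give a direction $d$ with $t_i - t_j \parallel d$ for all pairs, i.e. collinear centers. Once rank $6$ holds, $\mathcal F^n = [U\ W]\begin{pmatrix}0&I\\I&0\end{pmatrix}[U\ W]^T$. By Sylvester's law of inertia, the signature equals that of $\begin{pmatrix}0&I\\I&0\end{pmatrix}$, namely three positive and three negative eigenvalues. For the row-block condition, the $i$th row block is $U_i^T[\,[t_i-t_j]_\times U_j\,]_j$. Its rank is $3$ iff the vectors $t_i-t_j$ do not all lie on a line through $t_i$, which again follows from non-collinearity.

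\emph{Sufficiency.} From condition 1, the spectral decomposition gives $\mathcal F^n = M\,\mathrm{diag}(I_3,-I_3)\,M^T$ with $M\in\R^{3n\times 6}$. A change of basis turns this into $M'\begin{pmatrix}0&I\\I&0\end{pmatrix}M'^T = UW^T + WU^T$. The factorization is only unique up to the orthogonal group $O(3,3)$ of this form, and this freedom is what must be exploited. Next I would use the zero diagonal blocks, $U_iW_i^T + W_iU_i^T = 0$, together with condition 2, which makes $U_i$ invertible for a suitable choice of the $O(3,3)$ gauge. From these I would read off that $U_i^{-1}W_i$ is antisymmetric, i.e. $W_i = [t_i]_\times U_i$ type. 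Finally, I would define $P_i$ from $U_i$ and $t_i$, check that the resulting fundamental matrices equal the blocks up to the global scale, and check non-collinearity by reversing the necessity argument.

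\emph{Main obstacle.} The hard step is the gauge choice. A generic $O(3,3)$ representative need not have every $U_i$ invertible, nor $W_i^TU_i^{-T}$-type antisymmetry with the correct structure. I would first try to show that the zero-diagonal condition forces the column space of each $[U_i\ W_i]^T$ to be a $3$-dimensional isotropic subspace of the split form. Condition 2 keeps this subspace exactly $3$-dimensional. The remaining task is to pick a single global Lagrangian complement transverse to all these isotropic subspaces simultaneously, which is possible generically because transversality fails only on a proper subvariety for each $i$. This is the delicate point, and where I would expect to spend the effort; the classification of these isotropic $3$-planes (the two families of maximal isotropic subspaces of the split form) should identify them with points/planes, yielding the centers $t_i$.
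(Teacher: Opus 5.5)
The paper does not prove this theorem. It quotes it as background from \cite{kasten2019gpsfm}, so your proposal has to stand on its own.

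Your necessity direction is right in substance, but the factorization as you wrote it does not hold. With $U_i=(K_iR_i)^{-T}$ one has $F_{ij}=U_i[t_i-t_j]_\times U_j^T$, so you need $W_i=U_i[t_i]_\times$ rather than $[t_i]_\times U_i$. With that choice, $Ux+Wy=0$ reads $U_i(x+t_i\times y)=0$, i.e. $x+t_i\times y=0$ for all $i$. This forces $(t_i-t_j)\times y=0$, which gives the collinearity argument at once. Sylvester's law and the row-block rank computation then go through as you describe.

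There is a genuine gap in sufficiency, exactly at the step you flag, and the fix you propose does not work. You correctly show that each row space $L_i$ of $[U_i\ W_i]$ is a maximal isotropic subspace for $J$. But for a split form of signature $(3,3)$, two maximal isotropic subspaces in the same family always meet in an odd-dimensional, hence nonzero, subspace. Two in opposite families meet in dimension $0$ or $2$.

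Consequences for your transversality argument:
\begin{itemize}
\item Every Lagrangian in the family of $L_i$ fails to be transverse to $L_i$. So the bad locus for each $i$ contains a whole component of the isotropic Grassmannian; it is not a proper subvariety.
\item A common transverse complement can exist only if all $L_i$ lie in one family.
\item Nothing you use (zero diagonal blocks, conditions 1 and 2) forces this. Indeed $\operatorname{rank}F_{ij}=3-\dim(L_i\cap L_j)$.
\item Counterexample: for $n=2$ with $F_{12}$ invertible, every hypothesis you invoke holds, yet the matrix is not consistent. Here $L_1\cap L_2=0$, the two subspaces are in opposite families, and no gauge in $O(3,3)$ makes $U_1$ and $U_2$ invertible simultaneously. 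For $n\ge 3$, take generic $L_1,L_2$ from one family and $L_3$ from the other.
\end{itemize}

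The missing ingredient is that the off-diagonal blocks are genuine fundamental matrices of rank $2$. This is part of the definition of an $n$-view fundamental matrix in \cite{kasten2019gpsfm} and is only implicit in the statement as quoted. With it, $\dim(L_i\cap L_j)=1$ for all $i\neq j$, so all $L_i$ lie in one family. You then choose the complement $N$ in the \emph{opposite} family, where non-transversality to each $L_i$ is a proper closed condition, so finitely many of them cannot cover it. In Klein-quadric terms: after a duality if needed, each $L_i$ is the star of lines through a point $c_i$, and $N$ can be taken as the lines in a plane avoiding every $c_i$. After this gauge change all $U_i$ are invertible and $U_i^{-1}W_i$ is skew, and the rest of your construction goes through.
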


Recently \cite{miao2024tensor} explores the interesting higher-order setting with trifocal tensors. The \textit{block trifocal tensor} $\mathcal{T}^n \in \R^{3n\times 3n\times 3n}$ was introduced and formed by stacking the $3\times 3 \times 3$ trifocal tensors $T_{ijk}$ in the $ijk^{th}$ position in $\mathcal{T}^n$.  It was shown that $\mathcal{T}^n$ admits an exact low rank Tucker decomposition when its blocks are suitably scaled. The characterization of $\mathcal{T}^n$ is as follows.

\begin{theorem}[\cite{miao2024tensor}]\label{thrm:tft}
    The block trifocal tensor admits a Tucker factorization, $\mathcal{T}^n = \G_T \times_1 \mathcal{P}\times_2 \mathcal{C} \times_3 \mathcal{C}$, where $\G_T \in \R^{6\times 4 \times 4}, \mathcal{P} \in \mathbb{R}^{3n \times 6}$, and $\mathcal{C} \in \mathbb{R}^{3n \times 4}$. If the $n$ cameras that produce $\mathcal{T}^n$ are not all collinear, then mlrank$(\mathcal{T}^n) = (6,4,4)$. Otherwise, mlrank$(\mathcal{T}^n) \preceq (6,4,4)$. 
\end{theorem}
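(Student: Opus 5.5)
Since every trifocal tensor $T_{ijk}$ is a polynomial in the entries of $P_i,P_j,P_k$, I will write it as a fixed multilinear form applied to these matrices. Then I will read off the Tucker factors blockwise. The key identity is the classical determinantal formula: up to sign and scale,
\[
(T_{ijk})_{w,q,r} = (-1)^{w+1}\det\begin{bmatrix} \sim P_i^{w} \\ P_j^{q} \\ P_k^{r}\end{bmatrix},
\]
where $\sim P_i^{w}$ is the $2\times 4$ matrix obtained by deleting row $w$ of $P_i$, and $P_j^q$, $P_k^r$ are rows of $P_j$, $P_k$. The $4\times 4$ determinant is linear in the row $P_j^q$ and in the row $P_k^r$. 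It is also linear in the exterior product of the two rows of $\sim P_i^w$, viewed as an element of $\Lambda^2\R^4\cong\R^6$. So I will define the line-camera block $\tilde P_i\in\R^{3\times 6}$, whose $w$-th row is the Plücker vector (with sign) of the two rows of $P_i$ other than $w$. I will define the core $\G_T\in\R^{6\times4\times4}$ by $\G_T(a,b,c)=\det[\,L_a;\,e_b;\,e_c\,]$, where $L_a$ ranges over the basis of $\Lambda^2\R^4$. Multilinearity of the determinant then gives $T_{ijk}=\G_T\times_1\tilde P_i\times_2 P_j\times_3 P_k$ for each triple. Stacking blocks as $\mathcal{P}=[\tilde P_1;\dots;\tilde P_n]$ and $\mathcal{C}=[P_1;\dots;P_n]$ yields $\mathcal{T}^n=\G_T\times_1\mathcal{P}\times_2\mathcal{C}\times_3\mathcal{C}$, with the scales on the blocks chosen to match this formula.

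One point needs care: the blocks with repeated indices, such as $T_{iij}$ or $T_{iii}$, must be defined as exactly what the formula produces. For $T_{iii}$ this is zero, and the other repeated blocks are degenerate bifocal-type tensors. I will take this as the convention of \cite{miao2024tensor}.

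The upper bound $\mathrm{mlrank}\preceq(6,4,4)$ then follows at once, because each flattening factors through the factor matrices of the Tucker decomposition. This holds for any camera configuration, collinear or not.

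For equality in the non-collinear case I need three facts. First, $\mathrm{rank}(\mathcal{C})=4$. Second, $\mathrm{rank}(\mathcal{P})=6$. Third, the flattenings of the core are nondegenerate, meaning they have full rank $6$, $4$ and $4$ respectively. The core is an explicit $6\times4\times4$ tensor: after reordering, each mode-1 slice is the matrix of the bilinear form $(u,v)\mapsto L_a\wedge u\wedge v$. Checking full flattening ranks is a finite computation.

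The ranks of $\mathcal{C}$ and $\mathcal{P}$ are the main obstacle. $\mathcal{C}$ has rank less than $4$ exactly when there is a nonzero $X\in\R^4$ with $P_iX=0$ for all $i$, that is, when all cameras share a common center. Two distinct centers already rule this out. For $\mathcal{P}$, the row space is spanned by the Plücker coordinates of the lines $\ker$-dual to pairs of rows of $P_i$. These are the back-projected lines through the center $c_i$, and all such lines are contained in the 3-dimensional subspace $\{L: L\wedge c_i=0\}$. So $\mathcal{P}$ has rank less than $6$ iff some nonzero linear functional on $\Lambda^2\R^4$ vanishes on all of these subspaces. A nonzero functional on $\Lambda^2\R^4$ is $\omega\mapsto\omega\wedge M$ with $M\in\Lambda^2$ nonzero, so the condition becomes: $M$ meets every line through every $c_i$, i.e.\ $c_i\wedge M\wedge\ell=0$ for all such $\ell$. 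I expect to show that this forces every $c_i$ to lie on a single line. If $M$ is decomposable, it is a line $m$, and the condition forces $c_i\in m$, so all centers lie on $m$. If $M$ is not decomposable, it is a symplectic form, and no point has all its lines isotropic, so the condition cannot hold.

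Combining these, the flattening $\mathcal{T}^n_{(1)}=\mathcal{P}\,\G_{(1)}(\mathcal{C}\otimes\mathcal{C})^T$ has rank $6$, because $\mathcal{C}\otimes\mathcal{C}$ has full column rank $16$ and $\mathcal{P}$ has full column rank. Modes 2 and 3 are handled in the same way. This gives the multilinear rank $(6,4,4)$ exactly.
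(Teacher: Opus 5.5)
Your proposal is correct and follows essentially the same route as the argument in \cite{miao2024tensor} and the paper's appendix proof of the collinear case: the determinantal formula, expanded along the two rows of $\sim P_i^w$, yields the line projection factor $\mathcal{P}$ and a $\{-1,0,1\}$ core; $\operatorname{rank}(\mathcal{C})=4$ unless all centers coincide; and $\operatorname{rank}(\mathcal{P})=6$ unless the centers are collinear. The only difference is that the paper works directly with $\ker\mathcal{P}$---each $\ker\tilde P_i$ consists of lines through $c_i$, so a nonzero kernel vector would be a line through every center---whereas your annihilator formulation needs the extra decomposable/symplectic case split, plus the unstated fact $\operatorname{rank}\tilde P_i=3$, so that the row space of $\tilde P_i$ equals, rather than merely lies in, $\{L : L\wedge c_i=0\}$.
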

\subsection{Quadrifocal Tensors}
Analogous to  fundamental matrices and  trifocal tensors, quadrifocal tensors encode information across four views. In tensor notation, for views $I_i,I_j,I_k,I_l$, corresponding image points are related through the quadrifocal tensor $Q_{ijkl} \in \mathbb{R}^{3 \times 3 \times 3 \times 3}$ via 
\begin{align*}
\mathbf{x_i}^a\mathbf{x_j}^b\mathbf{x_k}^c\mathbf{x_l}^d \epsilon_{apw}\epsilon_{bqx}\epsilon_{cry}\epsilon_{dsz}(Q_{ijkl})^{pqrs} = 0_{wxyz},
\end{align*}
where $\epsilon$ is the Levi-Civita tensor and there are implicit summations over repeated indices.
Quadrifocal tensors also relate corresponding image lines or mixtures of lines and points on lines. From the camera matrices $P_i,P_j,P_k,P_l$, the corresponding quadrifocal tensor $Q_{ijkl}$ can be calculated directly through 
\begin{equation}\label{eq:qfromp}
    (Q_{ijkl})^{pqrs} = \det 
    \begin{bmatrix}
        P_i^p \\
        P_j^q \\
        P_k^r \\
        P_l^s\\
    \end{bmatrix}.
\end{equation}
\noindent Like trifocal tensors and fundamental matrices, quadrifocal tensors are only well-defined up to nonzero scale since the camera matrices $P_i$ are themselves only well-defined up to nonzero scales.  We will need to find appropriate scales below.

There are certain interesting characteristics of quadrifocal tensors that motivate our choice to use them for synchronization. 
Firstly, quadrifocal tensors capture the interactions of four views, and thus encode  complex geometric information.  The quadrifocal tensor includes triple and pairwise information too. \cite{miao2024tensor} showed the effectiveness of using redundant information in trifocal tensors for better averaging of translations, and quadrifocal tensors stand to further improve results. 
Secondly, there may be more flexibility for estimating quadrifocal tensors, especially in homogeneous scenes. In line correspondence relationships of quadrifocal tensors, the lines in the image planes do not necessarily have to correspond to the same world line. The quadrifocal tensor can also be estimated from $6$ point correspondences. We refer to the following for detecting and matching lines across views \cite{liu20233d, pautrat2023gluestick}. 
Thirdly, in the case of quadrifocal tensors all views are treated equally, unlike in the trifocal tensor case where there is a view treated specially. This might lead to a more stable algorithm in synchronization. 
Lastly, but not least, estimating a quadrifocal tensor requires sufficiently many inlier features across four views.  This implicitly enforces consistency of the viewing graph through matched points and/or lines. Intentionally using quadrifocal tensors for synchronization may therefore help us to start synchronization from a cleaner graph, especially in the dense viewing graph case. 
We refer to \cite{hartley2003multiple, hartley1998computation, oeding2017quadrifocal, shashua2000structure} for more  on the properties and estimation of quadrifocal tensors. We also refer to \cite{elqursh2011line,kuang2013pose,kukelova2017clever,miraldo2018minimal,kileel2018distortion} for more on solving minimal problems related to computer vision. We refer to \cite{kileel2017minimal, kileel2018distortion, oeding2017quadrifocal, aholt2014ideal,snapshot2025} for results on minimal problems and the characterization of higher order tensors from an algebraic geometry perspective.

\subsection{Tucker Decomposition}
In this section, we review the Tucker decomposition and the multilinear rank of a tensor. Let $\mathcal{X} \in \R^{M_1 \times M_2 \times \cdots \times M_N}$ be an order $N$ tensor, i.e., an $N$ dimensional array. The mode-$i$ flattening (or matricization) $\mathcal{X}_{(i)} \in \R ^{M_i \times (\underset{j\not = i}{\prod}M_j)}$ is the rearrangement of $\mathcal{X}$ into a matrix by taking the slices along the $i$-th mode of the tensor to form rows of a matrix in lexicographic order. The Frobenius norm of a tensor is defined as $\|\mathcal{X}\| = (\sum_{i_1,...,i_N} \mathcal{X}_{i_1i_2...i_N}^2)^{1/2} = \|\mathcal{X}_{(i)}\|_F$. Let $R_i$ denote the matrix rank of $\mathcal{X}_{(i)}$.  Then the multilinear rank of $\mathcal{X}$ is mlrank$(\mathcal{X}) = (R_1,R_2,...,R_N)$. The $i$-th mode tensor-matrix product $\times_i$ is multiplication between tensor $\mathcal{X}$ and $U\in \R^{m\times M_i}$ is $\mathcal{X} \times_i U \in \mathbb{R}^{M_1 \times \dots \times M_{i-1} \times m \times M_{i+1} \times \dots \times M_N}$ given by
\begin{align*}
    (\mathcal{X} \times_i U)_{j_1\cdots j_{i-1}kj_{i+1}\cdots j_N} = \sum_{j_i=1}^{M_i}\mathcal{X}_{j_1j_2\cdots j_N}U_{kj_i}.
\end{align*}
A Tucker decomposition $\mathcal{X}$ is any tensor factorization of the following form 
\begin{align*}
    \mathcal{X} = \mathcal{G} \times_1 U_1 \times_2 U_2 \times_3 \dots \times_N U_N =: \llbracket \mathcal{G};~U_1,~U_2,~\dots~,~U_N \rrbracket,
\end{align*}
where $\mathcal{G} \in \R^{R_1\times R_2 \times \cdots R_N}$ is a core tensor, and $U_i \in \R^{I_i \times R_i}$ are called factor matrices. One standard way to obtain a Tucker decomposition for $\mathcal{X}$ is the higher-order singular value decomposition (HOSVD). In HOSVD, the factor matrices $U_i$ are the $R_i$ leading left singular vectors of $\mathcal{X}_{(i)}$. The core tensor can then be obtained as $\mathcal{G} = \mathcal{X} \times_1 U_1^T \times_2 \cdots \times_N U_N^T$. Similar to the matrix case, HOSVD can be used to project onto the set of lower multilinear rank tensors $A = \{\mathcal{X} \in \R^{M_1\times M_2 \times \cdots \times M_N} \mid \text{mlrank}(\mathcal{X}) = (r_1,r_2,...,r_N)\}$ for $r_i \leq R_i$. Then, HOSVD will satisfy a quasi-optimal property, where if $X^* = \arg \min_{X \in A} \|X - \mathcal{X}\|_F$ and $X'$ is the tensor obtained  from truncating HOSVD to the ranks $r_1,r_2,...,r_n$, then \begin{align*}
    \|\mathcal{X} - X'\|_F \leq \sqrt{N}\|\mathcal{X} - X^*\|_F.
\end{align*}
We refer to \cite{kolda2009tensor} for more information on tensor decomposition.
\section{The Block Quadrifocal Tensor}\label{sec:theory}

We first introduce the block quadrifocal tensor, which is a novel construction. Given $n$ cameras $\{P_i\}_{i=1}^n$, we form the \textit{block quadrifocal tensor} $\mathcal{Q}^n \in \R^{3n \times 3n \times 3n \times 3n}$ by stacking the quadrifocal tensors along the four modes.  Thus, the $ijkl^{th}$ block of $\mathcal{Q}^n$ is the quadrifocal tensor $Q_{ijkl} \in \mathbb{R}^{3 \times 3 \times 3 \times 3}$ corresponding to the cameras $P_i,~P_j,~P_k,~P_l$ calculated using equation \eqref{eq:qfromp}.

\subsection{Properties of the Block Quadrifocal Tensor}

Next, we develop theory for the block quadrifocal tensor $\mathcal{Q}^n$. We establish interesting properties of it, including low multilinear rank, low projection rank, a simple relationship to camera poses, and the sufficiency of the low multilinear rank to determine the scales of each block in $\mathcal{Q}^n$. All proofs are supplied in the supplementary materials.  We start with the following important characterization of the block quadrifocal tensor, in terms of Tucker decomposition and   multilinear rank.  
\begin{theorem}\label{thrm:tucker decomposition}
Let $\mathcal{Q}^n$ be a block quadrifocal tensor. 
Then there a choice of blockwise nonzero scales such that $\mathcal{Q}^n = \mathcal{G}_Q \times_1 C \times_2 C \times_3 C \times_4 C$, where $C \in \R^{3n \times 4}$ is the stacked camera matrix and $\mathcal{G}_Q \in \mathbb{R}^{4\times 4 \times 4 \times 4}$ is a constant sparse tensor with all entries in $\{-1,0,1\}$. When the cameras that produce $\mathcal{Q}^n$ do not all share the same camera center, then mlrank$(\mathcal{Q}^n) = (4,~4,~4,~4)$. 
\end{theorem}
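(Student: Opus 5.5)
The approach is to expand the determinant in \eqref{eq:qfromp} multilinearly in its four rows. Write $P_i^p$ for the $p$-th row of $P_i$, a vector in $\R^4$. Then
\begin{align*}
(Q_{ijkl})^{pqrs} = \det\begin{bmatrix} P_i^p \\ P_j^q \\ P_k^r \\ P_l^s\end{bmatrix} = \sum_{a,b,c,d=1}^4 \epsilon_{abcd}\,(P_i^p)_a (P_j^q)_b (P_k^r)_c (P_l^s)_d ,
\end{align*}
where $\epsilon_{abcd}$ is the $4$-dimensional Levi-Civita symbol. Take $\mathcal{G}_Q := \epsilon \in \R^{4\times4\times4\times4}$. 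Its entries lie in $\{-1,0,1\}$, and only $24$ of its $256$ entries are nonzero, so it is sparse.

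Let $C\in\R^{3n\times 4}$ be the vertical stack of $P_1,\dots,P_n$, so that row $3(i-1)+p$ of $C$ is $P_i^p$. The displayed identity then says exactly that the $(3(i-1)+p,\,3(j-1)+q,\,3(k-1)+r,\,3(l-1)+s)$ entry of $\mathcal{G}_Q\times_1 C\times_2 C\times_3 C\times_4 C$ equals $(Q_{ijkl})^{pqrs}$. The required scales are therefore the ones induced by fixing a single representative $P_i$ for each camera and computing every block with \eqref{eq:qfromp}. Any other choice of scales on the blocks is related to this one by blockwise nonzero rescaling, which proves the first claim.

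Blocks with repeated indices such as $i=j$ also fit this framework. Equation \eqref{eq:qfromp} still defines them, and they come out naturally as determinants with possibly repeated rows, which is consistent with the formula. No separate treatment is needed, since $\mathcal{Q}^n$ is defined blockwise by \eqref{eq:qfromp}.

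For the multilinear rank, the decomposition immediately gives $\mathrm{mlrank}(\mathcal{Q}^n)\preceq(4,4,4,4)$. By the symmetry of the construction and the antisymmetry of $\epsilon$, all four modes behave alike, so it suffices to show that the mode-1 flattening has rank $4$. That flattening factors as
\begin{align*}
\mathcal{Q}^n_{(1)} = C\,(\mathcal{G}_Q)_{(1)}\,(C\otimes C\otimes C)^T .
\end{align*}
Its rank is $4$ once three facts hold: $C$ has rank $4$, $(\mathcal{G}_Q)_{(1)}$ has rank $4$, and $C\otimes C\otimes C$ has full column rank $64$. The last follows from $\operatorname{rank}(C)=4$. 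For the middle factor, $(\mathcal{G}_Q)_{(1)}$ has full row rank because row $a$ contains a $\pm1$ in column $(b,c,d)$ for distinct $b,c,d\neq a$, and different rows use disjoint supports. Hence $\mathcal{Q}^n_{(1)}$ equals $C$ times a matrix of full row rank $4$, and its rank is $\operatorname{rank}(C)=4$.

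The main obstacle is showing that $\operatorname{rank}(C)=4$ whenever the cameras do not all share the same center. Each $P_i$ has rank $3$ with kernel spanned by its homogeneous center $\tilde t_i=(t_i,1)$. If $\operatorname{rank}(C)\le 3$, there is a nonzero $v\in\R^4$ with $Cv=0$, so $P_iv=0$ for all $i$ and $v$ is proportional to every $\tilde t_i$. All centers would then coincide, a contradiction. Conversely, if all centers coincide, the rank drops, which shows the hypothesis is the right one. With $\operatorname{rank}(C)=4$ established, the mode-wise argument above gives $\mathrm{mlrank}(\mathcal{Q}^n)=(4,4,4,4)$.
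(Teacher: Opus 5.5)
Your proof is correct and follows the same route as the paper: you expand the determinant multilinearly to identify the core as the Levi-Civita tensor, and you use the common-kernel (camera center) argument to get $\operatorname{rank}(C)=4$. You also fill in a step the paper leaves implicit, namely that $(\mathcal{G}_Q)_{(1)}$ has full row rank and $C\otimes C\otimes C$ has full column rank, so the flattening inherits $\operatorname{rank}(C)$; and your core (nonzero exactly when $a,b,c,d$ are distinct) is the right one, whereas the paper's wording ``not all the same'' is a slip.
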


\begin{remark}\label{remark: collinear}
    The last sentence of the theorem shows an advantage of $\mathcal{Q}^n$ over the $n$-view fundamental matrix $\mathcal{F}^n$ and the block trifocal tensor $\mathcal{T}^n$.  Both $\mathcal{F}^n$ and $\mathcal{T}^n$ experience a rank drop as soon as the corresponding cameras are collinear.  In that case, rank$(\mathcal{F}^n) = 4$ (see \cite{geifman2020averaging}) and mlrank$(\mathcal{T}^n) = (5,4,4)$ (shown in the supplementary materials).  Synchronizing based on fundamental matrices and trifocal tensors then requires additional procedures like constructing virtual cameras. 
\end{remark}
\noindent Given the explicit Tucker factorization of the block quadrifocal tensor $\mathcal{Q}^n$, there is a simple procedure to retrieve cameras from $\mathcal{Q}^n$ up to an invertible $4\times 4$ linear transformation, or a projective  transformation of world coordinates. 
\begin{corollary}[Projective Reconstruction] Given $\mathcal{Q}^n$ with appropriately chosen blockwise scales, one can retrieve the $n$ camera matrices up to a global projective frame ambiguity, through the higher-order singular value decomposition. One simply takes the 4 singular vectors corresponding to the largest singular values in any flattening. 
\end{corollary}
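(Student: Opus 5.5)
The corollary follows from Theorem~\ref{thrm:tucker decomposition} together with standard facts about the HOSVD. The plan is to show that the column space of every flattening of $\mathcal{Q}^n$ equals the column space of the stacked camera matrix $C$. The leading singular vectors then span that space, and any basis of it is $C$ times some invertible $4\times 4$ matrix. We assume the blockwise scales have been chosen as in Theorem~\ref{thrm:tucker decomposition}, so that $\mathcal{Q}^n = \mathcal{G}_Q \times_1 C \times_2 C \times_3 C \times_4 C$. We also assume the cameras do not all share a common center, so that mlrank$(\mathcal{Q}^n)=(4,4,4,4)$.

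First I would write the mode-$i$ flattening of the Tucker form. By the standard Kronecker identity (see \cite{kolda2009tensor}),
\[
\mathcal{Q}^n_{(i)} = C\, (\mathcal{G}_Q)_{(i)}\, (C\otimes C\otimes C)^T ,
\]
with the Kronecker factors ordered according to the lexicographic convention. This gives $\operatorname{col}(\mathcal{Q}^n_{(i)}) \subseteq \operatorname{col}(C)$. By the theorem, $\operatorname{rank}(\mathcal{Q}^n_{(i)})=4$. Since $C$ has only four columns, this forces $\operatorname{rank}(C)=4$ and $\operatorname{col}(\mathcal{Q}^n_{(i)})=\operatorname{col}(C)$. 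The fact that $\mathcal{G}_Q$ is symmetric in its modes (it is built from determinants, up to signs) is not needed here, because the rank statement already holds in every mode.

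Next, let $U\in\R^{3n\times 4}$ be the four leading left singular vectors of $\mathcal{Q}^n_{(i)}$. Since the rank is exactly $4$, $U$ is an orthonormal basis of $\operatorname{col}(C)$. Hence $U = C H$ for the invertible matrix $H = C^{+}U \in \R^{4\times 4}$, and $C = U H^{-1}$. Reading off the $3\times 4$ row blocks of $U$ then gives cameras $\tilde P_i = P_i H$. These agree with the true cameras up to the single global transformation $H$, i.e.\ a projective change of world coordinates (or $H^{-1}$, depending on convention). If desired, the HOSVD core $\mathcal{Q}^n\times_1 U^T\cdots\times_4 U^T$ equals $\mathcal{G}_Q\times_1 (U^TC)\cdots\times_4(U^TC)$, which is consistent with this description. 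Finally, because the $\tilde P_i$ are read off blockwise, any further per-block scale is absorbed into the usual scale ambiguity of cameras.

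The only real content is the identification $\operatorname{col}(\mathcal{Q}^n_{(i)})=\operatorname{col}(C)$, and it rests entirely on the multilinear rank being exactly $4$. The main obstacle is therefore the non-degeneracy hypothesis: the corollary as stated implicitly requires that the cameras not all share a center. I would state this assumption explicitly and note that, when it fails, the singular vectors recover only a proper subspace and the reconstruction is not determined.
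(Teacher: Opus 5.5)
Your proof is correct and follows the reasoning the paper intends. The paper gives no separate proof; it reads the corollary off the factorization $\mathcal{Q}^n=\mathcal{G}_Q\times_1 C\times_2 C\times_3 C\times_4 C$ and the multilinear rank $(4,4,4,4)$ from Theorem~\ref{thrm:tucker decomposition}, so the four leading singular vectors of any flattening span $\operatorname{col}(C)$ and return $C$ up to an invertible $4\times 4$ matrix.

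Stating the non-common-center hypothesis explicitly is a good addition. One small correction: when that hypothesis fails, any four camera rows lie in a $3$-dimensional subspace, so $\mathcal{Q}^n$ vanishes identically rather than yielding a proper subspace.
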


\begin{remark}
    Intuitively, Theorem~\ref{thrm:tucker decomposition} gives a stronger constraint compared to the two view and three view case, and thereby shows an advantage of using higher-order measurements for synchronization.  
    Roughly speaking, valid block quadrifocal tensors are restricted to a relatively smaller subset inside the full tensor space, as compared to their counterparts. This can be seen by counting the codimensions of the sets of low rank tensors, where $Q, T, E$ here will  represent the set of low Tucker rank tensors/matrices for quadrifocal, trifocal, and essential matrices, respectively. Here,  $\text{codim}_{(\R^{3n})^{\otimes 4}}(Q) > \text{codim}_{(\R^{3n})^{\otimes 3}}(T) > \text{codim}_{\R^{3n \times 3n}}(E)$. A degrees of freedom count shows  $\text{codim}_{(\R^{3n})^{\otimes 4}}(Q) = \Omega(n^4)$, $\text{codim}_{(\R^{3n})^{\otimes 3}}(T)= \Omega(n^3)$, and $\text{codim}_{\R^{3n \times 3n}}(E) = \Omega(n^2)$.  
\end{remark}
The paper \cite{aholt2014ideal} uses the projection rank (P-Rank) of a single trifocal tensor. We recall the definition of  the projection rank of a tensor and show that the block quadrifocal tensor has a low projection rank independent of $n$. 
We also establish the projection rank for the block trifocal tensor later in the paper. 
\begin{definition}
The \textup{projection rank} is defined as the tuple of ranks of the various projections from an order $n$ tensor to matrices by contraction with any generic tuple of $n-2$ vectors, where $n\geq 3$.
\end{definition}

For example, given $Q=\mathcal{Q}^n$, its projection rank is a tuple of 6 numbers, corresponding to the ranks of the matrices $\sum_{i,j} x_i y_j Q_{ij::}$, $\sum_{i,j} x_i y_j Q_{i:j:}$, $\sum_{i,j} x_i y_j Q_{i::j}$, $\sum_{i,j} x_i y_j Q_{:ij:}$, $\sum_{i,j} x_i y_j Q_{:i:j}$, $\sum_{i,j} x_i y_j Q_{::ij}$ for generic vectors $x_i, y_j \in \R^{3n}$. 

\begin{theorem}\label{thrm: prank of qft}
  Let $\mathcal{Q}^n$ be a block quadrifocal tensor with appropriately chosen blockwise scales. 
  Then P-Rank($\mathcal{Q}^n$)=$(2,~2,~2,~2,~2,~2)$. 
\end{theorem}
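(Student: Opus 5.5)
We would work directly from the Tucker factorization of Theorem~\ref{thrm:tucker decomposition}. With appropriate scales, each entry is the determinant of a $4\times 4$ matrix whose rows are rows of the stacked camera matrix $C\in\R^{3n\times 4}$:
\[
\mathcal{Q}^n_{abcd}=\det\begin{bmatrix} C_a\\ C_b\\ C_c\\ C_d\end{bmatrix},
\]
where $C_a$ denotes the $a$-th row of $C$. Equivalently, $\mathcal{G}_Q$ is the Levi-Civita tensor $\epsilon$ on $\R^4$, which is the sparse $\{-1,0,1\}$ core. The tensor is therefore alternating, so up to sign all six contractions have the same form. It suffices to treat $M=\sum_{a,b}x_a y_b\,\mathcal{Q}^n_{ab::}$ and then invoke the symmetry $\mathcal{Q}^n_{\sigma(abcd)}=\operatorname{sgn}(\sigma)\mathcal{Q}^n_{abcd}$ for the other five.

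Set $u=C^Tx\in\R^4$ and $v=C^Ty\in\R^4$. By multilinearity of the determinant in its rows,
\[
M_{cd}=\det\begin{bmatrix} u^T\\ v^T\\ C_c\\ C_d\end{bmatrix}=C_c\,W\,C_d^T,\qquad W_{pq}=\sum_{r,s}\epsilon_{rspq}u_rv_s .
\]
Hence $M=CWC^T$, where $W$ is the $4\times4$ skew matrix dual to the bivector $u\wedge v$, i.e.\ the Plücker matrix of the line spanned by $u$ and $v$.

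Two facts give the rank bound and the matching lower bound.
\begin{enumerate}
\item $W$ has rank exactly $2$ whenever $u,v$ are linearly independent. Its kernel is $\operatorname{span}(u,v)$, since $Wu=Wv=0$ by antisymmetry of $\epsilon$. The bivector $u\wedge v$ is decomposable and nonzero, so $W\neq0$, and $W$ is the dual of a rank-2 bivector, so it has rank exactly 2.
\item $C$ has rank $4$. Theorem~\ref{thrm:tucker decomposition} assumes the cameras do not all share the same center, and in that case mlrank is $(4,4,4,4)$, so the flattening's column space is all of $\R^4$ and $C$ has full column rank $4$.
\end{enumerate}
Consequently $C^T$ maps $\R^{3n}$ onto $\R^4$, so for generic $x,y$ the vectors $u,v$ are generic in $\R^4$ and in particular independent. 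Since $C$ is injective and $C^T$ is surjective, $\operatorname{rank}(CWC^T)=\operatorname{rank}(W)=2$.

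The main obstacle is the bookkeeping for the first step: making precise that the core $\mathcal{G}_Q$ produced in Theorem~\ref{thrm:tucker decomposition} is exactly $\epsilon$ up to a global sign, so that the blockwise scales make the determinant formula hold uniformly across blocks. If the theorem's scales are instead only sign-adjusted per block, we would argue that the scales $\lambda_{ijkl}$ can be absorbed into the rows of $C$. The rank conclusion is invariant under the remaining rescaling $C\mapsto DC$ for diagonal invertible $D$. We will state clearly that the non-common-center hypothesis, or at least $\operatorname{rank} C=4$, is needed. If all centers coincide, then $C$ has rank $3$, and $CWC^T$ may have rank $2$ or drop to $0$ depending on the center; that degenerate case should be excluded.
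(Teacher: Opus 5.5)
Your proof is correct and is essentially the paper's argument. Both contract the factorization $\mathcal{Q}^n=\llbracket \mathcal{G}_Q; C,C,C,C\rrbracket$ against $xy^T$ to get $C\,\mathcal{G}_S\,C^T$, where $\mathcal{G}_S$ is the $4\times 4$ skew matrix built from $C^Tx$ and $C^Ty$. Both show this core has rank $2$ and use the alternating symmetry for the other five contractions. Your kernel argument ($Wu=Wv=0$, $W\neq 0$, even rank) replaces the paper's explicit row reduction. You also make explicit the step $\operatorname{rank}(CWC^T)=\operatorname{rank}(W)$ via $\operatorname{rank} C=4$, and hence the non-common-center hypothesis, which the paper uses without stating.

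One correction to your closing remark: if all centers coincide, then $u$, $v$ and every row of $C$ lie in the $3$-dimensional orthogonal complement of the common center. So $\mathcal{Q}^n$ vanishes identically and the contraction is always $0$, never of rank $2$.
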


Note that this is not implied by the low multilinear rank; it depends on the structure of the core tensor in Theorem~\ref{thrm:tucker decomposition}.  By optimizing with the fixed core, and preserving the symmetries in the factor matrices, we will  enforce the P-Rank constraint. 

The block quadrifocal tensor contains explicit information of the two view and three view geometry through the next result. 
\begin{proposition}\label{prop:additional properties}
    The block quadrifocal tensor $Q = \mathcal{Q}^n$ admits the following properties.
    \begin{enumerate}
        \item The super-diagonal blocks vanish, $Q_{iiii} = 0$ for all $i=1,...,n$. 
        \item The blocks where three of the indices are the same correspond to the camera center of view $i$ in view $j$. 
        \item The blocks where two of the indices are the same correspond to elements in the trifocal tensor $T_{ijk}$. 
        \item The blocks where two of the indices are the same and the other two are also the same correspond to elements in the fundamental matrix $F_{ij}$. 
    \end{enumerate}
\end{proposition}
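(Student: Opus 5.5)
Each item follows directly from the determinantal formula \eqref{eq:qfromp}, $(Q_{ijkl})^{pqrs}=\det[P_i^p;P_j^q;P_k^r;P_l^s]$, where $P_i^p$ is the $p$-th row of $P_i$. Equivalently, by Theorem~\ref{thrm:tucker decomposition}, $Q^{pqrs}=\mathcal{G}_Q\times_1 c_p\times_2 c_q\times_3 c_r\times_4 c_s$, where $\mathcal{G}_Q$ is the Levi-Civita tensor $\epsilon$ on $\R^4$ (up to sign) and $c_p$ is the corresponding row of $C$. So each entry is the determinant of four camera rows, and the plan is to identify these determinants with the classical formulas for the bifocal and trifocal tensors and the epipoles. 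The blockwise scales of Theorem~\ref{thrm:tucker decomposition} are global, so each statement holds up to the scale of the relevant block.

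\textbf{Items 1 and 2.} For item 1, $(Q_{iiii})^{pqrs}$ is the determinant of four rows drawn from the three rows of $P_i$. Two rows always repeat, so the determinant vanishes. For item 2, take the blocks $Q_{iiij}$ (and their permutations). The entry $(Q_{iiij})^{pqrs}$ vanishes unless $p,q,r$ are distinct. Up to sign it then equals $\det[P_i^1;P_i^2;P_i^3;P_j^s]$. Expanding along the last row gives $P_j^s\cdot \mathbf{c}_i$, where $\mathbf{c}_i\in\R^4$ is the cofactor vector spanning $\ker P_i$, i.e., the homogeneous camera center. Hence the nonzero entries of $Q_{iiij}$ assemble, up to the sign pattern of $\epsilon$, into $P_j\mathbf{c}_i$, which is the epipole, i.e., the image of the center of camera $i$ in view $j$.

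\textbf{Item 4.} Take blocks $Q_{iijj}$. The entry is nonzero only if $p\neq q$ and $r\neq s$. Writing $p,q$ as the complement of a single index $a\in\{1,2,3\}$, and similarly $r,s$ as the complement of $b$, the entry is $\pm\det[\sim P_i^a;\sim P_j^b]$. This is exactly the classical determinantal formula $F_{ab}=(-1)^{a+b}\det[\sim P_i^a;\sim P_j^b]$ for the fundamental matrix (Hartley--Zisserman, \cite{hartley2003multiple}). So $Q_{iijj}$ is $F_{ij}$ embedded with $\epsilon$-signs: $(Q_{iijj})^{pqrs}=\epsilon_{apq}\epsilon_{brs}F_{ab}$ up to a global sign.

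\textbf{Item 3.} Take blocks $Q_{iijk}$. Similarly, $(Q_{iijk})^{pqrs}=\epsilon_{apq}\det[\sim P_i^a;P_j^r;P_k^s]$ up to sign. This matches the classical formula $T_a^{rs}=(-1)^{a+1}\det[\sim P_i^a;P_j^r;P_k^s]$ for the trifocal tensor with $i$ as the special view. Blocks with the repeated index in other positions follow by permuting modes, which only changes signs.

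The main obstacle is bookkeeping rather than substance. The sign and index conventions must be matched consistently with the chosen trifocal and fundamental conventions and with the scales in $\mathcal{G}_Q$. I would first fix the convention $\mathcal{G}_Q=\epsilon$, then verify the identities entrywise for one representative index ordering. All other orderings then follow from the antisymmetry of the determinant.
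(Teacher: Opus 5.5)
Your proposal is correct and follows the paper's route: read each block entrywise from the determinantal formula \eqref{eq:qfromp}, then identify the resulting determinants with the classical Hartley--Zisserman expressions for the epipole, the trifocal tensor and the fundamental matrix. The paper simply cites \cite{hartley2003multiple} for each identification; you instead write out the cofactor expansion and the $\epsilon$-sign bookkeeping (e.g.\ $(Q_{iijk})^{pqrs}=\epsilon_{apq}T_a^{rs}$), which makes the same argument more precise.
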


Lastly, we stress that quadrifocal tensors can only be estimated up to an unknown nonzero scale.  The nice theoretical properties rely on choosing a suitable set of scales.  Thus it is important to understand how our constraints determine the unknown scales. In the following theorem, the sufficiency of the low multilinear rank constraint to determine the scales on the block quadrifocal tensor is established, so that camera poses can be uniquely determined.   This theorem supports our algorithm development in the next section.  The notation $\odot_b$ denotes blockwise multiplication.
\begin{theorem}\label{thrm:quadscales}
    Let $\mathcal{Q}^n \in \R^{3n \times 3n \times 3n \times 3n}$ be a block quadrifocal tensor corresponding to $n\geq 5$  generic cameras. Let $\Lambda \in \R^{n\times n \times n \times n}$ be a block scaling with $\lambda_{ijkl} \not = 0$ if and only if $i,j,k,l$ are not all equal. If $\Lambda \odot_b \mathcal{Q}^n \in \R^{3n\times 3n \times 3n \times 3n}$ has multilinear rank $(4,4,4,4)$, then there exists $\alpha, \beta, \gamma, \delta \in \R^n$ such that $\lambda_{ijkl} = \alpha_i \beta_j \gamma_k \delta_l$ whenever $i,j,k,l$ are not all the same. That is, $\Lambda$ and $\alpha \otimes \beta \otimes \gamma \otimes \delta$ are equal away from the super diagonal.
\end{theorem}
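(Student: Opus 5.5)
The plan is to mirror the scale-recovery arguments for the block fundamental matrix and the block trifocal tensor, working one flattening at a time. The rank conditions on individual flattenings should yield low-rank relations among the scales; these then propagate into a rank-one structure on $\Lambda$ away from the super-diagonal.

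\textbf{Step 1: reduction.} By Theorem~\ref{thrm:tucker decomposition} we may take $\mathcal{Q}^n = \mathcal{G}_Q \times_1 C \times_2 C\times_3 C\times_4 C$ with the canonical scales, so each block satisfies $Q_{ijkl} = \mathcal{G}_Q\times_1 P_i\times_2 P_j\times_3 P_k\times_4 P_l$. Multiplying $\Lambda$ by a rank-one tensor preserves the conclusion, so we normalize $\Lambda$ on a fixed spanning set of index tuples: for instance $\lambda_{1jkl}$ for suitable fixed patterns, absorbing these values into $\alpha,\beta,\gamma,\delta$. This gives $\lambda_{ijkl}=1$ on a ``star'' of indices.

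\textbf{Step 2: local rank constraints.} Consider the mode-1 flattening of $\Lambda\odot_b\mathcal{Q}^n$, which has rank $4$. Restrict to the $3\times 3$ row blocks $i\in\{i_1,i_2\}$ and to a selection of column blocks $(j,k,l)$. Take two row blocks $i,i'$ (six rows total) and columns indexed by a few triples $(j,k,l)$. Generically the unscaled submatrix $[P_i;P_{i'}]\,M$ has rank $4$ ($M$ being $4\times\cdot$), and its row space is tied to $C$. Since $n\ge 5$, there are enough distinct triples, and each column block $(j,k,l)$ spans a $3$-dimensional space inside $\mathbb{R}^4$ after contraction. A rank-$\le 4$ condition on a $6\times 9m$ submatrix with column blocks multiplied by $\lambda_{ijkl}$ or $\lambda_{i'jkl}$ forces the ratio $\lambda_{ijkl}/\lambda_{i'jkl}$ to be independent of $(j,k,l)$. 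The reason is that two different ratios would let us recover, via differences, a rank-$6$ row space, contradicting genericity. Concretely, I would compute that the row blocks $P_i$ and $P_{i'}$ together span all of $\mathbb{R}^4$ (6 vectors spanning 4 dimensions). The columns, contracted by generic $C$ in modes 2--4, realize enough of $\mathbb{R}^4$ that distinct scalings of the two row blocks by different ratios across two column groups raise the rank above $4$. This yields $\lambda_{ijkl}=\mu_{ii'}\lambda_{i'jkl}$ for all admissible $(j,k,l)$.

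\textbf{Step 3: assemble the rank-one form.} Applying Step 2 in each of the four modes shows that $\lambda_{ijkl}/\lambda_{i'jkl}$ depends only on $(i,i')$, and similarly for the other modes. Chaining these ratios through the normalized star from Step 1 gives $\lambda_{ijkl}=\alpha_i\beta_j\gamma_k\delta_l$.

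\textbf{Main obstacle.} The hardest part will be the bookkeeping around the vanishing super-diagonal blocks and the blocks with repeated indices. Blocks with three equal indices have rank-deficient slices (they encode epipoles), and $Q_{iiii}=0$. Hence in Step 2 one must choose the column triples $(j,k,l)$ so that both $(i,j,k,l)$ and $(i',j,k,l)$ avoid degenerate blocks while still giving full-rank information. This is where $n\ge5$ enters: it guarantees that triples of indices distinct from $i,i'$ exist and that these triples connect all index tuples. I would first prove the genericity rank claims by exhibiting one explicit camera configuration, such as random integer cameras, where the relevant submatrices have the required rank, and then invoke Zariski-openness. Separately, I would handle the tuples containing repeated indices by chaining through tuples of distinct indices.
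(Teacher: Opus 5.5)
Your route differs from the paper's and its core mechanism works, but the justification for Step 2 contains a false claim. The ``main obstacle'' you identify is also not real, and the workaround you propose for it would fail.

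Write $(Q_{ijkl})_{(1)}=P_iM_{jkl}$, where the columns of $M_{jkl}\in\R^{4\times 27}$ are the generalized cross products of $P_j^q,P_k^r,P_l^s$. The fact you need is that $M_{jkl}$ has full row rank $4$ whenever $(j,k,l)$ is not constant; its columns do not span a $3$-dimensional space. For $j=k\neq l$, the columns with fixed $q\neq r$ span the plane orthogonal to $P_j^q,P_j^r$, and these three planes sum to $\R^4$. The distinct-index case follows by specializing $P_k=P_j$ and using semicontinuity of rank.

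Then, for two non-constant triples $\tau_1,\tau_2$, the submatrix of $(\Lambda\odot_b\mathcal{Q}^n)_{(1)}$ on row blocks $i,i'$ and column blocks $\tau_1,\tau_2$ factors as
\[
\begin{bmatrix}\lambda_{i\tau_1}P_i & \lambda_{i\tau_2}P_i\\ \lambda_{i'\tau_1}P_{i'} & \lambda_{i'\tau_2}P_{i'}\end{bmatrix}\begin{bmatrix}M_{\tau_1}&0\\ 0&M_{\tau_2}\end{bmatrix}.
\]
The right factor has full row rank $8$. If $\lambda_{i\tau_1}\lambda_{i'\tau_2}\neq\lambda_{i\tau_2}\lambda_{i'\tau_1}$, eliminating between the two column groups shows that any left null vector $(a,b)$ has $a^TP_i=b^TP_{i'}=0$. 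The rank is then $6>4$, a contradiction.

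So the mode-$1$ fibers of $\Lambda$ over \emph{every} non-constant $(j,k,l)$ are proportional to a single vector $\alpha$, including triples with repeated indices. If instead you restricted to triples of distinct indices avoiding $i,i'$, as your last paragraph suggests, no constraint would ever involve an entry with a repeated index. ``Chaining through tuples of distinct indices'' could then never reach those entries.

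The only entries mode $1$ misses are $\lambda_{immm}$. Mode $2$ catches them, because the complementary triple $(i,m,m)$ is non-constant. Step 3 then becomes concrete: modes $1$ and $2$ give $\lambda_{ijkl}=\alpha_i\beta_j\phi(k,l)$ at every tuple off the super-diagonal, and mode $3$ gives $\phi(k,l)=\gamma_k\psi(l)$. Neither your Step 1 normalization nor the hypothesis $n\ge 5$ is used.

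The paper proceeds differently. It normalizes the star $\lambda_{i111}=\lambda_{1j11}=\lambda_{11k1}=\lambda_{111\ell}=1$ and examines three families of $5\times5$ minors of the mode-$1$ flattening, each with three rows from block $1$ and two from block $i$. Their determinants factor as $s\,\lambda_{i1j1}(\lambda_{i1j1}-\lambda_{ii11})$, $c(c^2-\lambda_{ijk1})$ and $c(c^3-\lambda_{ijk\ell})$, with nonvanishing cofactors certified by random numerical instances. The antisymmetry of $\mathcal{Q}^n$ under index permutations spreads these identities, so entries equal $c$, $c^2$ or $c^3$ according to how many indices are $1$.

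Your fiber-proportionality argument is more conceptual. It needs no computer-checked cofactors, since genericity reduces to $\operatorname{rank}P_i=3$ and $\operatorname{rank}M_{jkl}=4$. It also exposes the structure directly: each flattening of $\Lambda$ is rank one away from the constant fibers. The paper's approach instead buys explicit low-degree polynomial certificates anchored at a reference camera.
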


\section{Method}\label{sec:algorithm}
The input to our synchronization method will be a collection of estimated quadrifocal, trifocal tensors, and fundamental matrices. There may be many missing measurements, and we assume all missing blocks are filled in with zeros. Let $\Omega = \{(i,j,k,l) \mid Q_{ijkl} \not = 0_{3\times 3 \times 3 \times 3} \text{ or } i=j=k=l\}$ denote the set of indices that are observed. The estimated block quadrifocal tensor $\tilde{\mathcal{Q}}^n$ is formed by stacking the blocks. Recall that quadrifocal tensors are defined up to scale and each estimated quadrifocal tensors will be associated with an unknown scale. Assuming that the true underlying block quadrifocal tensor is $\mathcal{Q}^n$, the estimated block quadrifocal tensor is $\tilde{\mathcal{Q}}^n = P_{\Omega}(\Lambda \odot_b \mathcal{Q}^n)$, where $\odot_b$ denotes block multiplication and $P_{\Omega}$ is the projection onto observed blocks. Let $S^4(\R^n)$ denote the set of symmetric tensors of size $n^4$ and let $\Lambda$ denote the variable of unknown scales. Note that by \ref{eq:qfromp}, we have the fact that
\begin{equation}
\mathcal{Q}^n_{\pi(ijkl)} = \text{sgn}(\pi) \mathcal{Q}^n_{ijkl},
\end{equation}\label{eq:symmetry}    
where $\pi$ is a permutation of the indices. Thus, we can restrict $\Lambda$ to be in the set of symmetric tensors $S^4(\R^n)$, since the elements in $Q_{\pi(ijkl)}$ will have the same value up to a sign in $Q_{ijkl}$. 

\subsection{QuadSync: An ADMM-IRLS Scheme}
We wish to solve for the scales and the camera matrices given the estimated block quadrifocal tensor $\tilde{\mathcal{Q}}^n$.   We form the following optimization problem 
\begin{align}
\underset{\Lambda,C}{\min} & \quad \sum_{(i,j,k,l)\in \Omega} \|\Lambda_{ijkl} (\tilde{\mathcal{Q}}^n)_{ijkl} - \llbracket \mathcal{G}_Q ; ~C, ~C, ~C, ~C\rrbracket_{ijkl} \|_F \nonumber \\
\text{s.t. } & \quad \Lambda \in S^4(\R^n) \text{ and } \|\Lambda\|_F^2 = 1, \nonumber
\end{align}

The condition of $\|\Lambda\|_F^2 = 1$ is enforced to avoid a trivial solution. 
The norms are added without squaring them to reduce sensitivity to outliers.
The problem is difficult to solve due to the quartic degree with respect to $C$ and the nonconvexity from the scales. 
By introducing an auxiliary variable $B\in \R^{3n\times 4}$, the problem is equivalent to the following constrained optimization problem where the factors are separated,
\begin{align} 
    \underset{\Lambda,C_i,B}{\min} & \quad \sum_{(i,j,k,l)\in \Omega} \|\Lambda_{ijkl} (\tilde{\mathcal{Q}}^n)_{ijkl} - \llbracket \mathcal{G}_Q ; ~C_1, ~C_2, ~C_3, ~C_4\rrbracket_{ijkl}) \|_F \nonumber \\
    \text{s.t. } & \quad C_1 = C_2 = C_3 = C_4 = B  \nonumber \\
     & \quad \Lambda \in S^4(\R^n) \text{ and } \|\Lambda\|_F^2 = 1. 
\end{align}
We then form the augmented Lagrangian and solve the problem through ADMM with scaled dual variables:
\begin{align*}
    \underset{\Gamma_i}{\max} & \underset{\Lambda,C_i,B}{\min} \sum_{(i,j,k,l)\in \Omega} \|\Lambda_{ijkl} (\tilde{\mathcal{Q}}^n)_{ijkl} - \llbracket \mathcal{G}_Q ;C_1,~C_2,~C_3, ~C_4\rrbracket_{ijkl}) \|_F \nonumber \\ 
    & + \frac{\rho}{2} \sum_{i=1}^4 \|C_i - B + \Gamma_i\|_F^2  \quad \text{s.t. }  \Lambda \in S^4(\R^n) \text{ and } \|\Lambda\|_F^2 = 1. 
\end{align*}

\subsubsection{IRLS Outer Loop}
To solve the $L_1$ norm optimization in the main loss function, we use Iterative Reweighted Least Squares (IRLS).

In each IRLS iteration, we calculate the current set of weights, and then solve for all the other variables. Solving for the other variables is a least squares problem. Specifically, the problem becomes 

\begin{align}\label{opt:admm_irls}
    \underset{\Gamma_i}{\max} & \underset{\Lambda,C_i,B}{\min} \|W \odot_b ( (\Lambda \odot_b \tilde{\mathcal{Q}}^n) -  \llbracket \mathcal{G}_Q;~C_1,~C_2, ~C_3, ~C_4 \rrbracket) \|^2_F + \nonumber \\ 
    &\frac{\rho}{2} \sum_{i=1}^4 \|C_i - B + \Gamma_i\|_F^2  \quad \text{s.t. } \Lambda \in S^4(\R^n) \text{ and } \|\Lambda\|_F^2 = 1. 
\end{align}
Here $W$ is the set of IRLS weights included in the norm, so
\begin{align} \label{weights} w_{ijkl} = \begin{cases}
    1 / m^t_{ijkl} \quad \text{ if } (i,j,k,l)\in \Omega\\
    0 \quad \quad \quad \quad \quad \text{otherwise}
\end{cases} \end{align}
and 
\begin{align*}
    m^t_{ijkl} = \max(\delta,~sqrt(\|\Lambda^{(t-1)}_{ijkl} \tilde{\mathcal{Q}}^n_{ijkl} - \G_Q \times_1 (C_1)^{(t-1)}_i \times_2 (C_2)^{(t-1)}_j \times_3 (C_3)^{(t-1)}_k \times_4 (C_4)^{(t-1)}_l)\|_F).
\end{align*}
using the variables from the previous IRLS iteration. $\delta$ is a regularization parameter that upper-bounds the weights. 

\subsubsection{ADMM Inner Loop}
After fixing the weights for the outer IRLS loop, we solve the weighted least squares in (\ref{opt:admm_irls}) for $\Gamma_i, \Lambda, C_i, B$ with Alternating Direction Method of Multipliers (ADMM).   
\begin{enumerate}
    \item[(1)] $\mathbf{C_1,C_2,C_3,C_4,\Lambda :}$ We first solve for $\Lambda$ and the factors $C_i$ through alternating minimization. 
    To solve for $C_i$'s, given the convexity of the problem when fixing other variables, we can explicitly calculate the derivative and solve for $C_i$ by setting it to zero. Note that the flattening of the tensor can be written in terms of the factor matrices and the core from the tucker factorization. For example, $\mathcal{Q}^n_{(1)} = C_1 (\mathcal{G}_Q)_{(1)} (C_4 \otimes C_3 \otimes C_2)^T$. We use $C_{\sim i}$ to denote the matrix from kronecker products, so that for the $i$th mode, $C_{\sim i} = C_4 \otimes \cdots \otimes C_{i+1 }\otimes C_{i-1} \otimes \cdots \otimes C_1$. Let $K = (\G_Q)_{(i)} C_{\sim i}^T $. Then,  
    
    \begin{align*}
       \frac{\rho}{2} C_i + \left( W_{(i)}^2 \odot_b \left[ C_i K \right] \right) K^T =  \frac{\rho}{2}(B-\Gamma_i) + \left( W^2_{(i)} \odot_b (\Lambda \odot_b \tilde{\mathcal{Q}}^n)_{(i)}\right) K^T.
    \end{align*}
    We can solve for each row of $C_i$ separately.  Letting $x_j$ denote the $j$th row of $C_i$, we have 
    \begin{align*}
         x_j ( \frac{\rho}{2}  I_{4\times 4} + K \text{diag}((W_{(i)}^2)_j) K^T) = \frac{\rho}{2}(B-\Gamma_i)_j + ((W_{(i)}^2)_j \odot_b [(\Lambda \odot_b \tilde{\mathcal{Q}}^n)_{(i)}]_{j}) K^T
    \end{align*}
    and 
    \begin{align}\label{Cupdate}
        x_j  = (\frac{\rho}{2}(B-\Gamma_i)_j + ((W^2_{(i)})_j \odot_b [(\Lambda \odot_b \tilde{\mathcal{Q}}^n)_{(i)}]_{j}) K^T)( \frac{\rho}{2}I_{4\times 4} + K \text{diag}((W^2_{(i)})_j) K^T)^{-1}
    \end{align}
    where $\text{diag}((W^2_{(i)})_j)$ is the diagonal matrix corresponding to $((W\otimes 1_{3\times 3\times 3 \times 3})^2_{(i)})_j $. 
    \begin{remark}
        Solving for $C_i$ via \eqref{Cupdate} can be slow, as the dimensions for $K$ are $4 \times 27n^3$, and $\text{diag}(W_j^2)$ is $27n^3 \times 27n^3$. Though it is sparse, it can become computationally expensive. Each row can be solved in parallel to speed up computations. In addition, one can easily apply randomized updates to speed up the algorithm. One can randomly sample $m$ columns, so that the multiplication becomes of size $4\times m$ and $m\times m$ and $m\times 4$. Since the low-rankness is independent of the number of cameras, we can take $m=O(1)$. We include a small experiment in the appendices, showing that speed-ups can be achieved without sacrificing accuracy this way. 
    \end{remark}
    
    For $\Lambda$, we can solve by directly solving the convex optimization problem: 
    \begin{align}\label{Lupdate}
        \underset{\Lambda}{\min} & \quad \|W \odot_b [(\Lambda \odot_b \tilde{\mathcal{Q}}^n) -  \llbracket \mathcal{G}_Q ; ~C_1, ~C_2,~ C_3, ~C_4 \rrbracket]\|_F^2  \nonumber \\
        \text{s.t. } & \quad \quad \Lambda \in S^4(\R^n) \text{ and } \|\Lambda\|_F^2 = 1.
    \end{align}
    This is done by solving for each block separately, where 
    \begin{align*}
    \Lambda_{ijkl} = \frac{trace((\llbracket \mathcal{G}_Q ;~ C_1,~ C_2,~ C_3,~ C_4 \rrbracket_{ijkl})_{(1)}^T ((\tilde{\mathcal{Q}}^n)_{ijkl}))_{(1)})}{\|((\tilde{\mathcal{Q}}^n)_{ijkl})_{(1)}\|_F^2}.
    \end{align*}
    After calculating $\Lambda$, we first symmetrize, then normalize $\Lambda$ so that $\|\Lambda\|_F^2 = 1$ and $\Lambda \in S^4(\R^n)$. We iterate this process for solving for $C_i,\Lambda$ iteratively for 10 times. 
    
    \item[(2)] $\mathbf{B:}$ For $B$, we can solve directly, via 
    \begin{equation} \label{Bupdate}
        \frac{1}{4} \left( \sum_{i=1}^4 C_i + \Gamma_i \right) = B.
    \end{equation}
    \item[(3)] $\mathbf{\Gamma_i:}$ For the final ascent step, we set \begin{equation}\label{Gupdate}
    \Gamma_{i}^{(k+1)} = \Gamma_i^{(k)} + (C_i - B).
    \end{equation}
\end{enumerate}

\subsection{A Joint Optimization Framework}

We also develop a joint optimization framework that simultaneously synchronizes the block quadrifocal tensor, the block trifocal tensor, and the block essential matrix. We develop it only for the calibrated case. 
We observe that the block quadrifocal tensor and the block trifocal tensor share same factor matrices, i.e., the $3n \times 4$ stacked camera matrices. Let $\mathcal{P}$ denote the $3n\times 6$ matrix of the line projection matrices associated with each camera matrix. See the appendices for more details on the line projection matrices. We establish a new factorization of the block essential matrix, and show that it shares factor matrices with the block trifocal tensor in the first mode in the following result. 
\begin{theorem}\label{thrm:ess decomposition}
    Let $\mathcal{E}^n \in \mathcal{S}^2(\R^{3n})$ be a block essential matrix corresponding to the calibrated cameras $C\in \R^{3n\times 4}$.  Then $\mathcal{E}^n$ admits the factorization:
    \begin{equation}
        \mathcal{E}^n = \mathcal{P} \begin{pmatrix}
            0 & 0 & 0 & 0 & 0 & 1\\
            0 & 0 & 0 & 0 & 1 & 0\\
            0 & 0 & 0 & -1 & 0 & 0\\
            0 & 0 & -1 & 0 & 0 & 0\\
            0 & 1 & 0 & 0 & 0 & 0\\
            -1 & 0 & 0 & 0 & 0 & 0
        \end{pmatrix} \mathcal{P}^T = \mathcal{P} \mathcal{G}_E\mathcal{P}^T.
    \end{equation}
\end{theorem}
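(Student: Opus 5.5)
The plan is to prove the factorization one $3\times 3$ block at a time, using the classical determinantal formula for the fundamental matrix together with the Plücker (line) coordinates of pairs of camera rows.

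\textbf{Step 1: the line projection matrix.} Write $P_i^1,P_i^2,P_i^3$ for the rows of the calibrated camera $P_i=R_i[I\mid -t_i]$. The $q$-th row of the line projection matrix $\mathcal{P}_i\in\R^{3\times 6}$ is the Plücker vector $P_i^{q+1}\wedge P_i^{q+2}\in\wedge^2\R^4\cong\R^6$, with indices taken cyclically. In coordinates this is the vector of $2\times 2$ minors in a fixed ordering of the index pairs, say $(12,13,14,23,24,34)$, possibly with a sign flip on some coordinates. The stacked $\mathcal{P}\in\R^{3n\times 6}$ is formed from the blocks $\mathcal{P}_i$. I will fix the ordering and signs to agree with the convention in the appendix.

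\textbf{Step 2: the determinant as a bilinear form.} For vectors $a,b,c,d\in\R^4$, the Laplace expansion of $\det[a;b;c;d]$ along its first two rows gives
\[
\det[a;b;c;d]=(a\wedge b)^T\,\mathcal{G}\,(c\wedge d).
\]
Here $\mathcal{G}$ is the symmetric $6\times 6$ matrix of the Hodge/Klein pairing. It pairs each index pair with its complement ($12\leftrightarrow 34$, $13\leftrightarrow 24$, $14\leftrightarrow 23$), and its entries are the signs of the corresponding permutations. So $\mathcal{G}$ is anti-diagonal with entries $\pm1$. The main bookkeeping step is to check that, under the ordering and signs chosen in Step 1, $\mathcal{G}$ is exactly $\mathcal{G}_E$. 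If the signs do not match, I will absorb them into the sign convention of $\mathcal{P}$. This is legitimate because flipping the sign of a Plücker coordinate conjugates $\mathcal{G}$ by a diagonal $\pm1$ matrix.

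\textbf{Step 3: off-diagonal blocks.} By Hartley--Zisserman (Section 17.1), up to a global sign convention,
\[
(F_{ij})_{pq}=\det\begin{bmatrix}P_i^{p+1}\\ P_i^{p+2}\\ P_j^{q+1}\\ P_j^{q+2}\end{bmatrix}.
\]
By Step 2 this equals $(\mathcal{P}_i\,\mathcal{G}_E\,\mathcal{P}_j^T)_{pq}$. Since $K_i=I$, $F_{ij}$ is the essential matrix $E_{ij}$, with the scale fixed by this formula; this is the blockwise scaling implicit in the statement. So the $(i,j)$ block of $\mathcal{P}\mathcal{G}_E\mathcal{P}^T$ equals $E_{ij}$ for $i\neq j$.

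\textbf{Step 4: diagonal blocks and symmetry.} When $i=j$, the determinant in Step 3 has four rows drawn from only three rows of $P_i$. So some row repeats and the determinant vanishes, which gives $\mathcal{P}_i\mathcal{G}_E\mathcal{P}_i^T=0$, matching the zero diagonal blocks of $\mathcal{E}^n$. Because $\mathcal{G}_E$ is symmetric, $\mathcal{P}\mathcal{G}_E\mathcal{P}^T$ is symmetric, which matches $E_{ji}=E_{ij}^T$ and $\mathcal{E}^n\in S^2(\R^{3n})$.

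I expect the only real obstacle to be the sign and ordering bookkeeping in Step 2. The structural argument is immediate once the Laplace/Plücker identity is in place.
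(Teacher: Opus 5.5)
Your route is the same as the paper's. You verify $\mathcal{E}^n_{ij}=\mathcal{P}_i\mathcal{G}_E\mathcal{P}_j^T$ blockwise from the determinantal formula for $E_{ij}$, and expand that formula as a pairing of $2\times 2$ minors.

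\textbf{Where it fails.} The bookkeeping you deferred does not go through for the matrix as printed. $\mathcal{G}_E$ is \emph{not} symmetric: $(\mathcal{G}_E)_{16}=1$ while $(\mathcal{G}_E)_{61}=-1$; the other antidiagonal pairs do agree. So your Step 4 claim is false.

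Your Step 2 fallback cannot repair this. The pairing is symmetric because $\det[a;b;c;d]=\det[c;d;a;b]$, an even row permutation. Any re-signing, reordering, or linear change $M$ of Plücker coordinates turns the Klein matrix $\mathcal{G}$ into $M^T\mathcal{G}M$, which is still symmetric. In addition, your Step 1 already fixes $\mathcal{P}$ to the appendix convention, so that freedom is not available anyway.

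\textbf{What the expansion actually gives.} Do the Laplace expansion in the appendix ordering $(12,13,14,23,24,34)$. You get $\mathcal{G}$ antidiagonal, with $\mathcal{G}_{r,7-r}=1,-1,1,1,-1,1$ for $r=1,\dots,6$. Its negative agrees with the printed $\mathcal{G}_E$ in every entry except $(1,6)$.

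Hence $\mathcal{P}\mathcal{G}_E\mathcal{P}^T=-\mathcal{P}\mathcal{G}\mathcal{P}^T+2(\mathcal{P}e_1)(\mathcal{P}e_6)^T$, with $e_1,e_6\in\mathbb{R}^6$, and this rank-one term is generically nonzero. Column $(ab)$ of $\mathcal{P}_i$ is the cross product of columns $a$ and $b$ of $P_i$. So for $P_i=R_i[I\mid -t_i]$ you get $\mathcal{P}_ie_1=R_ie_3$ and $\mathcal{P}_ie_6=-R_i(e_3\times t_i)$. The $(i,i)$ block therefore picks up $-2R_ie_3(e_3\times t_i)^TR_i^T$, which is nonzero unless $e_3\times t_i=0$.

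With the core as printed, the product is thus neither symmetric nor zero on diagonal blocks. The statement fails literally, even up to blockwise scales.

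\textbf{What you have proved.} Your argument establishes the corrected statement, with $(\mathcal{G}_E)_{16}=-1$, i.e.\ $\mathcal{G}_E=-\mathcal{G}$. The global sign is a harmless uniform rescaling of all blocks. This is presumably what the paper's \emph{direct expansion} and numerical check actually use. You should carry out the sign computation and state this correction explicitly, rather than asserting that $\mathcal{G}_E$ is symmetric.
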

We next establish the projection rank for the block trifocal tensor, which will also be implicitly enforced. 
\begin{theorem}\label{thrm:tft p rank}
    Let $\mathcal{T}^n \in \mathbb{R}^{3n \times 3n \times 3n}$ be a block trifocal tensor. Then P-Rank($\mathcal{T}^n$) = (4,~3,~3). 
\end{theorem}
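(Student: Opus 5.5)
Since projection rank is computed by contracting against generic vectors, I will work directly from the Tucker factorization of Theorem~\ref{thrm:tft}, $\mathcal{T}^n = \G_T \times_1 \mathcal{P}\times_2 \mathcal{C} \times_3 \mathcal{C}$, with the explicit core $\G_T\in\R^{6\times4\times4}$ from \cite{miao2024tensor}. The core encodes the incidence of a line (Plücker coordinates, 6-dim) with the join of two points: $(\G_T)_{a b c}$ is, up to sign, the pairing of the $a$-th Plücker basis element with $e_b\wedge e_c$. Contracting $\mathcal{T}^n$ in mode $m$ with a generic $x\in\R^{3n}$ amounts to contracting $\G_T$ in mode $m$ with the generic vector $\mathcal{P}^Tx\in\R^6$ (for $m=1$) or $\mathcal{C}^Tx\in\R^4$ (for $m=2,3$). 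The resulting $3n\times 3n$ matrix is then $\mathcal{C}\,M\,\mathcal{C}^T$ or $\mathcal{P}\,M\,\mathcal{C}^T$, where $M$ is a small matrix built from the core.

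The key steps are as follows.

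\textbf{Mode 1.} Contracting the core with a generic $L\in\R^6$ gives $M_L=\big(\langle L, e_b\wedge e_c\rangle\big)_{b,c}$. This is the $4\times4$ skew matrix of the dual Plücker line. For generic $L$, not on the Klein quadric, it is nondegenerate, so its rank is $4$. Since $\mathcal{C}$ has full column rank $4$ for cameras not all sharing a center, $\mathcal{C}M_L\mathcal{C}^T$ has rank $4$.

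\textbf{Modes 2 and 3.} Contracting the core with a generic point $X\in\R^4$ gives the $6\times4$ matrix $N_X=(\langle e_a, X\wedge e_c\rangle)_{a,c}$. The map $Y\mapsto X\wedge Y$ has kernel spanned by $X$, so $N_X$ has rank $3$. The projected matrix is $\mathcal{P}N_X\mathcal{C}^T$, and by symmetry mode 3 gives the same. I then need $\mathcal{P}$ to have full column rank $6$ and $\mathcal{C}$ to have full column rank $4$; this holds for non-collinear cameras by Theorem~\ref{thrm:tft}, since mlrank is $(6,4,4)$. It follows that $\mathrm{rank}(\mathcal{P}N_X\mathcal{C}^T)=\mathrm{rank}(N_X)=3$.

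\textbf{Ordering of the tuple.} I will match the conventions of the definition: the tuple lists, in order, the ranks obtained by contracting modes $1,2,3$ respectively with a generic vector.

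The main obstacle is getting the explicit core $\G_T$ correct (signs and index conventions from \cite{miao2024tensor}) so that the Plücker/wedge interpretation is justified. The fallback is to verify the ranks $4$ and $3$ by direct computation on the $6\times4\times4$ sparse core, which is routine. A secondary point is the genericity assumption on the cameras, namely non-collinearity, needed for full column rank of the factors. I will state that assumption as inherited from Theorem~\ref{thrm:tft}.
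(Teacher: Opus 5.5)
Your proposal is correct and is essentially the paper's proof: you push the generic contraction through $\mathcal{T}^n=\mathcal{G}_T\times_1\mathcal{P}\times_2\mathcal{C}\times_3\mathcal{C}$ and read off the ranks of the contracted core, which is a $4\times 4$ skew matrix of rank $4$ in mode 1 and a $6\times 4$ matrix of rank $3$ in modes 2 and 3 (the paper writes its $4\times 6$ transpose). Where you differ is in supporting detail: your exterior-algebra reasons (the Pfaffian is the Klein quadric form; $Y\mapsto X\wedge Y$ has kernel spanned by $X$) replace the paper's inspection of explicit matrices, and you use $\mathrm{mlrank}=(6,4,4)$ to get full column rank of $\mathcal{P}$ and $\mathcal{C}$, which justifies two steps the paper leaves implicit: that $\mathcal{P}^Tx$ and $\mathcal{C}^Tx$ are generic, and that the $3n\times 3n$ contraction has the same rank as the small core matrix.
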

We denote the core matrix here as $\mathcal{G}_E$. Since the block trifocal tensor $\mathcal{T}^n$ and block quadrifocal tensor $\mathcal{Q}^n$ can be factorized as $\mathcal{T}^n = \llbracket \mathcal{G}_T; \mathcal{P}, C, C \rrbracket$ and $ \mathcal{Q}^n= \llbracket \mathcal{G}_Q; C, C, C, C\rrbracket$, we can formulate the following optimization problem to synchronize all three entities simultaneously: 
\begin{align*}
    \underset{\Lambda_E,\Lambda_T,\Lambda_Q,C}{\min} & \quad \frac{1}{n_Q}\|W_Q \odot_b (\Lambda_Q \odot_b \tilde{\mathcal{Q}}^n - \llbracket \G_Q ; ~C, ~C, ~C, ~C\rrbracket \|_F^2 + \\
    & \quad \frac{1}{n_T}\|W_T \odot_b (\Lambda_T \odot_b \tilde{\mathcal{T}}^n - \llbracket \G_T ; ~\mathcal{P}, ~C, ~C \rrbracket)\|_F^2 + \\
    & \quad \frac{1}{n_E}\|W_E \odot_b (\Lambda_E \odot_b \tilde{\mathcal{E}}^n - \llbracket \G_E ; ~\mathcal{P}, ~\mathcal{P} \rrbracket\|_F^2 \\
    \text{s.t. } & \quad  \|\Lambda_Q\|_F^2 = \|\Lambda_T\|_F^2 = \|\Lambda_E\|_F^2 = 1, \\
    & \quad  \Lambda_Q \in S^4(\R^n), ~ \Lambda_T(i,:,:), ~\Lambda_E \in {S}^2(\R^{n}),
\end{align*}
where $n_Q, n_T, n_E$ are the number of estimated blocks in the estimated block quadrifocal tensor $\tilde{\mathcal{Q}}^n$, the estimated block trifocal tensor $\tilde{\mathcal{T}}^n$, and the estimated block essential matrix $\tilde{\mathcal{E}}^n$, respectively. After separating factors, let us denote the three cost functions as $f_{Q}(W_Q, ~\Lambda_Q,~ C_1,~C_2,~C_3,~C_4)$, $g_T(W_T, ~\Lambda_T,~P_1 ~C_5,~C_6)$, $h_E(W_E,~\Lambda_E,~P_2,~P_3)$, respectively. We also have IRLS weights set  for $W_q,~W_t,~W_e$ defined analogously to \eqref{weights}. See the appendices for more. 

\subsubsection{Joint Opt. with ADMM-IRLS}
Similarly, we solve the joint optimization problem with an ADMM formulation:
\begin{align*}
    & \underset{\Lambda_E,\Lambda_T,\Lambda_Q,C}{\min}  f_{Q}(W_Q,~\Lambda_Q,~ C_1,~C_2,~C_3,~C_4)+ g_T(W_T, ~\Lambda_T,~P_1,~C_5,~C_6)+ h_E(W_E,~\Lambda_E,~P_2,~P_3) + \\
    & \quad\quad\quad\quad \frac{\rho}{2}\sum_{i=1}^6 \|C_i - B + \Gamma_i\|_F^2  + \frac{\rho}{2}\sum_{i=1}^3 \|P_i - D + \tau_i\|_F^2
\end{align*}
 such that $\|\Lambda_Q\|_F^2 = \|\Lambda_T\|_F^2 = \|\Lambda_E\|_F^2 = 1,  \Lambda_Q \in S^4(\R^n)$, $\Lambda_T(i,:,:)$ and  $\Lambda_E \in {S}^2(\R^{n})$.
\begin{enumerate}
    \item[(1)] $\mathbf{C_i}$, $\mathbf{P_i}$: We first aim to solve for the $C_i$'s and $P_i's$. The updates for $C_1,C_2,C_3,C_4$ are the same as before in the quadrifocal tensor case but with additional weights $n_Q$. For $C_5,C_6$, the derivation is identical. To solve for $C_i$, let $C_{\sim i} = P \otimes C_j$, and let $K = (\G_T)_{(i)} C_{\sim i}^T $. Then, we have the update rule for $C_i$ as 
\begin{align*}
    x_j = (\frac{\rho}{2}( B -\Gamma_i)_j + \frac{1}{n_T}((W_T^2 \odot L_T) \odot_b T)_j K^T)(\frac{\rho}{2}I_{4\times 4} + \frac{1}{n_T} K \text{diag}((W_T)_j^2)K^T)^{-1}.
\end{align*}
For $P_1$, let $K = (\G_T)_{(3)}(C_2 \otimes C_1)^T$. Then
\begin{align*}
    x_j = (\frac{\rho}{2}( D -\tau)_j + \frac{1}{n_T}((W_T^2 \odot L_T) \odot_b T)_j K^T) (\frac{\rho}{2}I_{4\times 4} + \frac{1}{n_T} K \text{diag}(W_T^2)K^T)^{-1}.
\end{align*}
Similarly, we also solve for each row separately for $P_2,P_3$ with $\Lambda_E, W_E, \G_E$ and $n_E$.

\item[(2)] $\mathbf{\Lambda_Q}$, $\mathbf{\Lambda_T}$, $\mathbf{\Lambda_E}$, $\mathbf{B}$, $\mathbf{D}$, $\mathbf{\Gamma_i}$, $\mathbf{\tau_i}$: They are all the same or analogous as in (\ref{Lupdate}), (\ref{Bupdate}), (\ref{Gupdate}). The order of the optimization is $C_1$, $C_2$, $C_3$, $C_4$, $C_5$, $C_6$, $P_1$, $P_2$, $P_3$, $\Lambda_Q$, $\Lambda_T$, $\Lambda_E$ alternatively for a couple iterations. Then $B,D$, then $\Gamma_i, \tau_i$. After the solving the ADMM step, we recalculate the IRLS weights and repeat until convergence. More details of the algorithm can be found in the appendices.  
\end{enumerate}

\section{Experiments}\label{sec:experiments}
\subsection{A Higher-Order Cycle Based Heuristic}\label{sec:heuristic}

We develop a heuristic to estimate the corruption level of a quadruplet of cameras using trifocal tensors. 
Similar consistency measurements for group synchronization have been proposed in \cite{duncan2025higher}, but here our objects are camera matrices and do not admit a group structure since the matrices are non-square.

Given a quadruplet of images $I_i,~I_j,~I_k,~I_l$, suppose we have four trifocal tensor estimates $T_{ijk}, ~T_{jkl}, ~T_{kli}, ~T_{lij}$. Assuming each trifocal tensor is consistent, we have camera matrices corresponding to each trifocal tensor. Specifically, suppose $T_{ijk}$ gives $\{P^1_i,~P^1_j,~P^1_k\}$, $T_{jkl}$ gives $\{P^2_j,~P^2_k,~P^2_l\}$, $T_{kli}$ gives $\{P^3_k,~P^3_l,~P^3_i\}$, and $T_{lij}$ gives $\{P^4_l,~P^4_i,~P^4_j\}$, where each set of cameras is in a different projective frame.
Given two sets of the same cameras $P^1_j,~P^1_k$ and $P^2_j, ~P^2_k$ in different projective frames, they can be synchronized by solving for a projective transformation $H_{21} \in \R^{4\times 4}$ such that $P^2_j H_{21} = a_jP^1_j$ and $P^2_k H_{21} = a_kP^1_k$. Then, $\{P^2_jH_{21},~P^2_kH_{21},~P^2_lH_{21}\}$ is in the same frame as $\{P^1_i,~P^1_j,~P^1_k\}$. This can be solved for via linear least squares in the variables $H_{21}, ~a_j, ~a_k$. 
Denote this operation by $\{P^1_i,~P^1_j,~P^1_k\} \xleftarrow{H_{21}} \{P^2_j,~P^2_k,~P^2_l\}$. 
The following operations can then be performed:
\begin{align*}
    \{P^1_i,~P^1_j,~P^1_k\} \xleftarrow{H_{21}}& \{P^2_j,~P^2_k,~P^2_l\} \\
    \{P^2_jH_{21},~P^2_kH_{21},~P^2_lH_{21}\} \xleftarrow{H_{32}}& \{P^3_k,~P^3_l,~P^3_i\} \\
    \{P^3_kH_{32},~P^3_lH_{32},~P^3_iH_{32}\} \xleftarrow{H_{43}}& \{P^4_l,~P^4_i,~P^4_j\}.
\end{align*}
When there is no noise in any of the initial trifocal tensors, $a P^1_i = P^4_i H_{43}$ and $ b P^1_j = P^4_jH_{43}$ for some nonzero scales $a,b \in \mathbb{R}$. However, when there is noise, this will not hold. Our heuristic $d(i,~j,~k,~l)$ can be defined as a tuple measuring the distance between the first two and last two cameras: 
\begin{align}\label{heuristic} 
    d(i,~j,~k,~l) := g(d(P^1_i,~P^4_iH_{43}), \hspace{0.2em} ~d(P^1_j,~P^4_jH_{43})). 
\end{align}
In our experiments, we choose $d(P_m,~P_n)$ as either the angle between the orientation of calibrated cameras $P_m,~P_n$ or the relative difference of the locations. 
Further, we choose $g(\cdot, \cdot)$ to be the average. 

\begin{figure*}
    \centering
    \includegraphics[width=\linewidth,height=7.2cm]{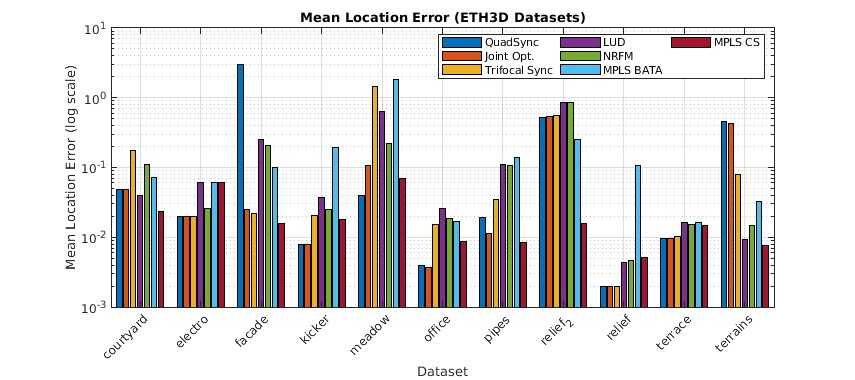}
    \caption{Mean location error for ETH3D datasets}
    \label{fig:Mean Location Error ETH3D}
\end{figure*}

\begin{figure*}
    \centering
    \includegraphics[width=\linewidth, height = 7.2cm]{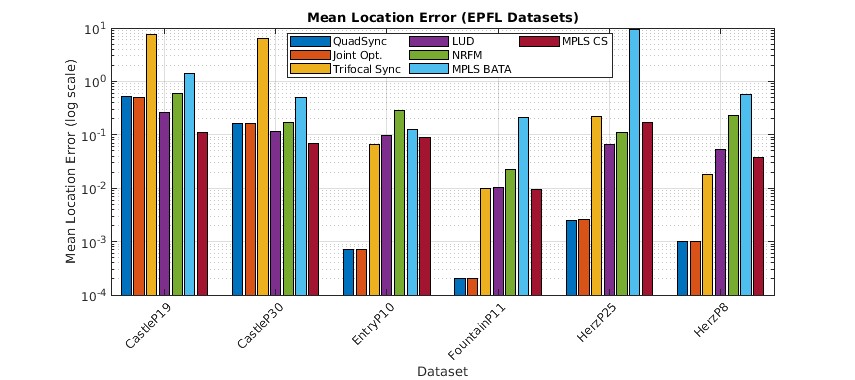}
    \caption{Mean location error for EPFL datasets}
    \label{fig:Mean Location Error EPFL}
\end{figure*}
\subsection{Estimation of Quadrifocal Tensors}\label{sec: qft est} 

Given images $I_i$ with $i=1,...,N$, the procedure of estimating the quadrifocal tensors is described below. 

\begin{enumerate}
    \item[\textbf{1.}] First, estimate trifocal tensors from point correspondences. We follow the procedure in \cite{miao2024tensor}, where the Subspace Constrained Tyler's M-estimator (STE) from \cite{yu2024subspace} is used to reject outliers, assuming at most 40\% of point correspondences are inliers. Initial trifocal tensors are estimated linearly with at most 30 inlier points and refined with bundle adjustment. Note that the corresponding trifocal tensor $T_{ijk}$ can also be estimated using the essential matrices $E_{ji}, ~E_{ki},~ E_{kj}$. A detailed description of the latter method can be found in \cite{hartley2003multiple}. 
    
    \item[\textbf{2.}] Then, enumerate all four cycles in this viewing graph and calculate the inconsistency heuristic \eqref{heuristic}. Cycles whose heuristic in terms of rotation is larger than 3 degrees or is larger than 0.2 in terms of translation are neglected. For good cycles, we simply take an average of all the camera pose estimates obtained by calculating the heuristic. The quadrifocal tensor is then estimated via \eqref{eq:qfromp}.  
    
\end{enumerate}
After estimation of the quadrifocal tensors by the above procedure, we form the estimated block quadrifocal tensor $\tilde{\mathcal{Q}}^n$. We also fill in the blocks with two overlapping indices, which correspond to the trifocal tensors. We normalize each block in $\tilde{\mathcal{Q}}^n$ to have Frobenius norm 1. 
\begin{remark} There is a lack of practical work for estimating quadrifocal tensors. The current procedure for estimating quadrifocal tensors introduces extra sources of error. Yet, numerical results show resilience and robustness against these extra sources of noise. Our work also motivates future research on the estimation of quadrifocal tensors.
\end{remark}

\subsection{Initialization of Algorithm}
For QuadSync, we initialize the following variables $W_Q, ~\Lambda_Q, ~C_i, ~B, ~\Gamma_i$. We set $\rho = 0.01$, run IRLS for 4 times, and use just 1 ADMM loop inside each IRLS.  
\begin{itemize}
    \item $\mathbf{C_i}$, $\mathbf{B}$: For the initialization of the camera matrices, we first retrieve the camera matrices from $\tilde{\mathcal{Q}}^n$. That is, we apply the higher order singular value decomposition and simply take the first four singular vectors from any mode as the initial estimate for the cameras $\tilde{C}$. We set $C_1 = C_2 = C_3 = C_4 = B = \tilde{C}$. 
    \item $\mathbf{\Lambda_Q}$, $\mathbf{W_Q}$: We solve \eqref{Lupdate} for the optimal set of scales. We then initialize $W_Q$ with \eqref{weights}. 
    \item $\mathbf{\Gamma_i}$: We set $\Gamma_i^0 = 0_{3n\times 4}$ for all $i\in \{1,2,3,4\}$. 
\end{itemize} 
For the joint optimization, the variables are all initialized similarly, but $\rho$ is set to $0.00001$, and we only run 2 IRLS loops. 
\begin{remark} Though our optimization problems are nonconvex, an initialization with HOSVD appears to be sufficient based on empirical evidence. This is an advantage, as two view methods such as \cite{sengupta2017new}, which also solve complicated nonconvex problems, usually rely on initializing with specialized and more complex methods. 
\end{remark}

\subsection{Numerical Experiments}
Our method operates on datasets that are dense, such that forming the higher order 4-uniform viewing hypergraphs does not disconnect the viewing graph or cause the viewing graph to become too sparse.  We truncate all datasets so that they satisfy a density requirement, as follows.  We first obtain all the indices $(i,j,k,l) \in \Omega$. 
Then we calculate the density of each vertex in all observed quadruplet indices. 
We delete any vertex where the density is lower than 0.05. 
If that results in a viewing graph with too few cameras, the threshold is lowered to 0.02 and then 0.01. 
One of the datasets in ETH3D does not provide a dense enough graph and is discarded. To compare with the joint optimization framework and other global synchronization methods, we restrict our algorithm to the calibrated setting. We compare with the following methods: TrifocalSync \cite{miao2024tensor}, NRFM \cite{sengupta2017new}, MPLS \cite{shi2020message}, LUD \cite{ozyesil2015stable},  BATA \cite{zhuang2018baseline}, and a very recent SOTA method Cycle-Sync~\cite{li2025cycle}. Note that BATA and Cycle-Sync are location synchronization algorithms.  We use MPLS for their rotation synchronization component.

\subsubsection{ETH3D}
We apply our methods to 11 stereo multiview benchmark datasets in ETH3D from \cite{schoeps2017cvpr,Schops_2019_CVPR}. ETH3D is a diverse collection of datasets, ranging from artificial to natural as well as indoor to outdoor environments. It has been used widely as benchmarks for SfM. We preprocess the datasets such that if the two view viewing graph contains weakly connected components, we retain only the largest component. We estimate the keypoint matches using SIFT feature points. We then estimate the trifocal tensors with the implementations in \cite{julia2017critical}. We construct the block quadrifocal tensor following the procedure in Section \ref{sec: qft est}. We also estimate the essential matrices using RANSAC for comparison purposes.

\subsubsection{EPFL}
We also apply our methods to 6 multiview stereo high resolution datasets in EPFL \cite{strecha2008benchmarking}.
We use GlueStick \cite{pautrat2023gluestick} to detect and match feature points. We then estimate the trifocal tensors following the same procedure as with ETH3D. We estimate the fundamental matrices with GC-RANSAC \cite{barath2018graph} for comparison purposes. We then form the estimated block quadrifocal tensor.  

\subsubsection{Results}
The mean location errors for the ETH3D and EPFL datasets are displayed in Figures \ref{fig:Mean Location Error ETH3D} and \ref{fig:Mean Location Error EPFL}, respectively. We also report the median location errors, the mean rotation errors, and the median rotation errors, and refer to tables in the appendices for comprehensive results. In terms of location quality, we see that QuadSync and Joint Opt. perform the best or very close to the best in 7/11 of the datasets in ETH3D, and 4/6 of the EPFL datasets. Since our methods operate on higher-order measurements, they prefer dense viewing graphs. Among most of the datasets on which we perform poorly, the ratio of the number of estimated quadrifocal blocks to the total number of blocks drops below $30\%$. However, among the datasets where this ratio is larger than $70\%$, our proposed methods tend to find better quality solutions than SOTA algorithms. Having clean estimations and a high completion rate are important for the success of the algorithm. The completion rate and runtime of our algorithm can be found in the appendix. 
\begin{remark} Since we operate on dense graphs which are computationally heavy, our methods may perform best with distributed synchronization approaches. Two small experiments to demonstrate the potential of applying the methods to large scale datasets, when given good clusters, is included in the appendices. We hope our work may motivate more research on distributed approaches for higher-order methods. As mentioned in Remark~\ref{remark: collinear}, another advantage of using quadrifocal tensors is that they become less sensitive in the case of collinear configurations. We experiment on a near-collinear subsequence of views from the plant\_scene\_1 dataset from ETH3D SLAM, see Figure~\ref{fig:Near_Collinear} for the reconstructed camera poses. Note that this configuration can't be reconstructed with the pairwise measurements of fundamental matrices. 
\end{remark}

\begin{figure}[htp!]
    \centering
    \includegraphics[width=0.5\linewidth]{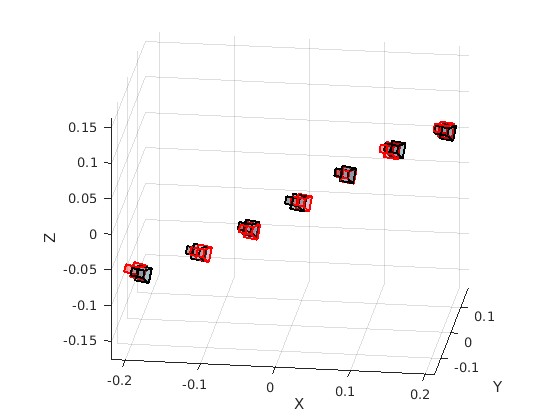}
    \caption{QuadSync retrieved camera poses on near-collinear views from plant\_scene\_1 dataset from ETH3D SLAM.}
    \label{fig:Near_Collinear}
\end{figure}

Another small synthetic experiment to show the effectiveness of our theory and algorithm for collinear configurations of cameras is included in the appendices. We see that the algorithm is insensitive to the collinear configuration and can still successfully synchronize the quadrifocal tensors.

\section{Conclusion}\label{sec:conclusion}

We investigated totally novel approaches to synchronization in SfM.  We introduced the block quadrifocal tensor, and laid  theoretical foundations for its synchronization.  We established an explicit Tucker decomposition for the block quadrifocal tensor, in which the factor matrices are exactly the camera poses. We established the low multilinear rank of $(4,~4,~4,~4)$ and low projection rank of $(2,~2,~2,~2,~2,~2)$. We showed that the multilinear rank is sufficient to determine the scales for the block quadrifocal tensor, enabling camera pose retrieval. We also established additional new properties of the $n$-view essential matrix and the block trifocal tensor. We devised the algorithms QuadSync and Joint Opt. for the synchronization of the block quadrifocal tensor, and for the joint synchronization of all three entities together. Experiments showed the methods can improve accuracy in synchronization, especially location accuracy. 
 In sum, we provided the first ever work on synchronization through quadrifocal tensors, and showed that there is great promise for advancing SfM systems via higher-order measurements.

There are many opportunities for future work.  Though we develop strong theory, the algorithms depend on the quality of the block estimations. This motivates work on estimating and processing quadrifocal tensors. Our algorithms also require dense subsets and are more computationally demanding than two view and three view methods. The development of distributed synchronization methods for higher-order methods is needed. 

\section*{Acknowledgement}
D.M.~and G.L.~were supported in part by NSF award DMS 2152766. D.M.~was also supported by the DSI-MnDRIVE PhD Graduate Assistantship from the University of Minnesota Data Science Initiative and Minnesota’s Discovery, Research, and Innovation Economy. J.K.~was supported in part by NSF awards DMS 2309782 and CISE-IIS 2312746, the DOE award SC0025312, the Sloan Foundation, and start-up grants from the College of Natural Science and Oden Institute at the University of Texas at Austin.

We thank Shaohan Li and Michael Thomas for helpful discussions on processing ETH3D datasets. We thank Yifan Zhang for suggestions on tuning ADMM algorithms with tensors. 

\bibliography{main}

@String(IJCV = {Int. J. Comput. Vis.})

@String(CVPR= {IEEE Conf. Comput. Vis. Pattern Recog.})

@String(ICCV= {Int. Conf. Comput. Vis.})

@String(ECCV= {Eur. Conf. Comput. Vis.})

@String(NIPS= {Adv. Neural Inform. Process. Syst.})

@String(ACMMM= {ACM Int. Conf. Multimedia})

@String(IJCV  = {IJCV})

@String(CVPR  = {CVPR})

@String(ICCV  = {ICCV})

@String(ECCV  = {ECCV})

@String(NIPS  = {NeurIPS})

@String(ACMMM = {ACM MM})

@book{hartley2003multiple,
  title={Multiple {V}iew {G}eometry in {C}omputer {V}ision},
  author={Hartley, Richard and Zisserman, Andrew},
  year={2003},
  publisher={Cambridge university press}
}

@inproceedings{kasten2019gpsfm,
  title={G{PSfM}: Global projective sfm using algebraic constraints on multi-view fundamental matrices},
  author={Kasten, Yoni and Geifman, Amnon and Galun, Meirav and Basri, Ronen},
  booktitle=CVPR,
  pages={3264--3272},
  year={2019}
}

@inproceedings{sengupta2017new,
  title={A new rank constraint on multi-view fundamental matrices, and its application to camera location recovery},
  author={Sengupta, Soumyadip and Amir, Tal and Galun, Meirav and Goldstein, Tom and Jacobs, David W and Singer, Amit and Basri, Ronen},
  booktitle={CVPR},
  pages={4798--4806},
  year={2017}
}

@inproceedings{kasten2019algebraic,
  title={Algebraic characterization of essential matrices and their averaging in multiview settings},
  author={Kasten, Yoni and Geifman, Amnon and Galun, Meirav and Basri, Ronen},
  booktitle={ICCV},
  pages={5895--5903},
  year={2019}
}

@inproceedings{geifman2020averaging,
  title={Averaging essential and fundamental matrices in collinear camera settings},
  author={Geifman, Amnon and Kasten, Yoni and Galun, Meirav and Basri, Ronen},
  booktitle={CVPR},
  pages={13--19},
  year={2020}
}

@inproceedings{miao2024tensor,
  title={Tensor-based synchronization and the low-rankness of the block trifocal tensor},
  author={Miao, Daniel and Lerman, Gilad and Kileel, Joe},
  booktitle=NIPS,
  volume={37},
  pages={69505--69532},
  year={2024}
}

@inproceedings{shashua2000structure,
  title={On the structure and properties of the quadrifocal tensor},
  author={Shashua, Amnon and Wolf, Lior},
  booktitle=ECCV,
  pages={710--724},
  year={2000}
}

@inproceedings{hartley1998computation,
  title={Computation of the quadrifocal tensor},
  author={Hartley, Richard I},
  booktitle={ECCV},
  pages={20--35},
  year={1998},
  organization={Springer}
}

@article{oeding2017quadrifocal,
  title={The quadrifocal variety},
  author={Oeding, Luke},
  journal={Linear Algebra and its Applications},
  volume={512},
  pages={306--330},
  year={2017},
  publisher={Elsevier}
}

@article{duncan2025higher,
  title={Higher-Order Group Synchronization},
  author={Duncan, Adriana L and Kileel, Joe},
  journal={arXiv preprint arXiv:2505.21932},
  year={2025}
}

@inproceedings{yu2024subspace,
  title={A subspace-constrained {T}yler's estimator and its applications to structure from motion},
  author={Yu, Feng and Zhang, Teng and Lerman, Gilad},
  booktitle={CVPR},
  pages={14575--14584},
  year={2024}
}

@inproceedings{wilson2014robust,
  title={Robust global translations with 1{DSFM}},
  author={Wilson, Kyle and Snavely, Noah},
  booktitle={ECCV},
  pages={61--75},
  year={2014},
  organization={Springer}
}

@article{snapshot2025,
  title={Snapshot of algebraic vision},
  author={Kileel, Joe and Kohn, Kathlen},
  journal={AMS Proceedings of Symposia in Pure Mathematics},
  volume={111},
  pages={323--363},
  year={2025}
}

@article{li2025cycle,
  title={Cycle-{S}ync: Robust Global Camera Pose Estimation through Enhanced Cycle-Consistent Synchronization},
  author={Li, Shaohan and Shi, Yunpeng and Lerman, Gilad},
  journal={arXiv preprint arXiv:2511.02329},
  year={2025}
}

@article{ozyesil2015stable,
  title={Stable camera motion estimation using convex programming},
  author={Ozyesil, Onur and Singer, Amit and Basri, Ronen},
  journal={SIAM Journal on Imaging Sciences},
  volume={8},
  number={2},
  pages={1220--1262},
  year={2015},
  publisher={SIAM}
}

@inproceedings{shi2020message,
  title={Message passing least squares framework and its application to rotation synchronization},
  author={Shi, Yunpeng and Lerman, Gilad},
  booktitle={ICML},
  pages={8796--8806},
  year={2020},
  organization={PMLR}
}

@inproceedings{zhuang2018baseline,
  title={Baseline desensitizing in translation averaging},
  author={Zhuang, Bingbing and Cheong, Loong-Fah and Lee, Gim Hee},
  booktitle={CVPR},
  pages={4539--4547},
  year={2018}
}

@inproceedings{pautrat2023gluestick,
  title={Glue{S}tick: Robust image matching by sticking points and lines together},
  author={Pautrat, R{\'e}mi and Su{\'a}rez, Iago and Yu, Yifan and Pollefeys, Marc and Larsson, Viktor},
  booktitle=ICCV,
  pages={9706--9716},
  year={2023}
}

@inproceedings{barath2018graph,
  title={Graph-cut {RANSAC}},
  author={Barath, Daniel and Matas, Ji{\v{r}}{\'\i}},
  booktitle={CVPR},
  pages={6733--6741},
  year={2018}
}

@article{aholt2014ideal,
  title={The ideal of the trifocal variety},
  author={Aholt, Chris and Oeding, Luke},
  journal={Mathematics of Computation},
  volume={83},
  number={289},
  pages={2553--2574},
  year={2014}
}

@article{kolda2009tensor,
  title={Tensor decompositions and applications},
  author={Kolda, Tamara G and Bader, Brett W},
  journal={SIAM Review},
  volume={51},
  number={3},
  pages={455--500},
  year={2009},
  publisher={SIAM}
}

@inproceedings{schoeps2017cvpr,
  author = {Thomas Sch\"ops and Johannes L. Sch\"onberger and Silvano Galliani and Torsten Sattler and Konrad Schindler and Marc Pollefeys and Andreas Geiger},
  title = {A Multi-View Stereo Benchmark with High-Resolution Images and Multi-Camera Videos},
  booktitle = CVPR,
  pages = {3260-3269},
  year = {2017}
}

@InProceedings{Schops_2019_CVPR,
author = {Sch\"ops, Thomas and Sattler, Torsten and Pollefeys, Marc},
title = {{BAD SLAM}: Bundle Adjusted Direct {RGB-D SLAM}},
booktitle = CVPR,
pages = {134--144},
year = {2019}
}

@inproceedings{julia2017critical,
  title={A critical review of the trifocal tensor estimation},
  author={Juli{\`a}, Laura F and Monasse, Pascal},
  booktitle={Pacific-Rim Symposium on Image and Video Technology},
  pages={337--349},
  year={2017},
  organization={Springer}
}

@inproceedings{strecha2008benchmarking,
  title={On benchmarking camera calibration and multi-view stereo for high resolution imagery},
  author={Strecha, Christoph and Von Hansen, Wolfgang and Van Gool, Luc and Fua, Pascal and Thoennessen, Ulrich},
  booktitle=CVPR,
  pages={1--8},
  year={2008}
}

@incollection{snavely2006photo,
  title={Photo {T}ourism: {E}xploring photo collections in 3{D}},
  author={Snavely, Noah and Seitz, Steven M and Szeliski, Richard},
  booktitle={SIGGRAPH},
  pages={835--846},
  year={2006}
}

@inproceedings{schonberger2016structure,
  title={Structure-from-motion revisited},
  author={Schonberger, Johannes L and Frahm, Jan-Michael},
  booktitle={CVPR},
  pages={4104--4113},
  year={2016}
}

@inproceedings{madhavan2025recovery,
  title={On the Recovery of Cameras from Fundamental Matrices},
  author={Madhavan, Rakshith and Arrigoni, Federica},
  booktitle={ICCV},
  pages={20934--20943},
  year={2025}
}

@inproceedings{govindu2004lie,
  title={Lie-algebraic averaging for globally consistent motion estimation},
  author={Govindu, Venu Madhav},
  booktitle=CVPR,
  volume={1},
  pages={1--8},
  year={2004}
}

@article{hartley2013rotation,
  title={Rotation averaging},
  author={Hartley, Richard and Trumpf, Jochen and Dai, Yuchao and Li, Hongdong},
  journal=IJCV,
  volume={103},
  number={3},
  pages={267--305},
  year={2013},
  publisher={Springer}
}

@article{chatterjee2017robust,
  title={Robust relative rotation averaging},
  author={Chatterjee, Avishek and Govindu, Venu Madhav},
  journal={IEEE Transactions on Pattern Analysis and Machine Intelligence},
  volume={40},
  number={4},
  pages={958--972},
  year={2017},
  publisher={IEEE}
}

@inproceedings{chatterjee2013efficient,
  title={Efficient and robust large-scale rotation averaging},
  author={Chatterjee, Avishek and Govindu, Venu Madhav},
  booktitle=ICCV,
  pages={521--528},
  year={2013}
}

@inproceedings{arie2012global,
  title={Global motion estimation from point matches},
  author={Arie-Nachimson, Mica and Kovalsky, Shahar Z and Kemelmacher-Shlizerman, Ira and Singer, Amit and Basri, Ronen},
  booktitle={2012 Second International Conference on 3D Imaging, Modeling, Processing, Visualization \& Transmission},
  pages={81--88},
  year={2012},
  organization={IEEE}
}

@inproceedings{goldstein2016shapefit,
  title={Shapefit and {S}hapekick for robust, scalable structure from motion},
  author={Goldstein, Thomas and Hand, Paul and Lee, Choongbum and Voroninski, Vladislav and Soatto, Stefano},
  booktitle=ECCV,
  pages={289--304},
  year={2016},
  organization={Springer}
}

@article{arrigoni2016spectral,
  title={Spectral synchronization of multiple views in {SE} (3)},
  author={Arrigoni, Federica and Rossi, Beatrice and Fusiello, Andrea},
  journal={SIAM Journal on Imaging Sciences},
  volume={9},
  number={4},
  pages={1963--1990},
  year={2016},
  publisher={SIAM}
}

@article{rosen2019se,
  title={{SE}-{S}ync: A certifiably correct algorithm for synchronization over the special {E}uclidean group},
  author={Rosen, David M and Carlone, Luca and Bandeira, Afonso S and Leonard, John J},
  journal={The International Journal of Robotics Research},
  volume={38},
  number={2-3},
  pages={95--125},
  year={2019},
  publisher={Sage Publications Sage UK: London, England}
}

@article{cucuringu2012sensor,
  title={Sensor network localization by eigenvector synchronization over the {E}uclidean group},
  author={Cucuringu, Mihai and Lipman, Yaron and Singer, Amit},
  journal={ACM Transactions on Sensor Networks (TOSN)},
  volume={8},
  number={3},
  pages={1--42},
  year={2012},
  publisher={ACM New York, NY, USA}
}

@article{briales2017cartan,
  title={Cartan-{S}ync: Fast and global {SE}(d)-synchronization},
  author={Briales, Jesus and Gonzalez-Jimenez, Javier},
  journal={IEEE Robotics and Automation Letters},
  volume={2},
  number={4},
  pages={2127--2134},
  year={2017},
  publisher={IEEE}
}

@inproceedings{pan2024global,
  title={Global structure-from-motion revisited},
  author={Pan, Linfei and Bar{\'a}th, D{\'a}niel and Pollefeys, Marc and Sch{\"o}nberger, Johannes L},
  booktitle=ECCV,
  pages={58--77},
  year={2024},
  organization={Springer}
}

@inproceedings{sweeney2015theia,
  title={Theia: A fast and scalable structure-from-motion library},
  author={Sweeney, Christopher and Hollerer, Tobias and Turk, Matthew},
  booktitle={ACMMM},
  pages={693--696},
  year={2015}
}

@inproceedings{li2024efficient,
  title={Efficient detection of long consistent cycles and its application to distributed synchronization},
  author={Li, Shaohan and Shi, Yunpeng and Lerman, Gilad},
  booktitle={CVPR},
  pages={5260--5269},
  year={2024}
}

@inproceedings{liu20233d,
  title={3{D} line mapping revisited},
  author={Liu, Shaohui and Yu, Yifan and Pautrat, R{\'e}mi and Pollefeys, Marc and Larsson, Viktor},
  booktitle=CVPR,
  pages={21445--21455},
  year={2023}
}

@article{thirthala2012radial,
  title={Radial multi-focal tensors: Applications to omnidirectional camera calibration},
  author={Thirthala, SriRam and Pollefeys, Marc},
  journal=IJCV,
  volume={96},
  number={2},
  pages={195--211},
  year={2012},
  publisher={Springer}
}

@inproceedings{hruby2023four,
  title={Four-view geometry with unknown radial distortion},
  author={Hruby, Petr and Korotynskiy, Viktor and Duff, Timothy and Oeding, Luke and Pollefeys, Marc and Pajdla, Tomas and Larsson, Viktor},
  booktitle=CVPR,
  pages={8990--9000},
  year={2023}
}

@inproceedings{comport2007accurate,
  title={Accurate quadrifocal tracking for robust 3{D} visual odometry},
  author={Comport, Andrew I and Malis, Ezio and Rives, Patrick},
  booktitle={Proceedings 2007 IEEE International Conference on Robotics and Automation},
  pages={40--45},
  year={2007},
  organization={IEEE}
}

@inproceedings{kuang2013pose,
  title={Pose estimation with unknown focal length using points, directions and lines},
  author={Kuang, Yubin and Astrom, Kalle},
  booktitle=ICCV,
  pages={529--536},
  year={2013}
}

@inproceedings{kukelova2017clever,
  title={A clever elimination strategy for efficient minimal solvers},
  author={Kukelova, Zuzana and Kileel, Joe and Sturmfels, Bernd and Pajdla, Tomas},
  booktitle=CVPR,
  pages={4912--4921},
  year={2017}
}

@inproceedings{miraldo2018minimal,
  title={A minimal closed-form solution for multi-perspective pose estimation using points and lines},
  author={Miraldo, Pedro and Dias, Tiago and Ramalingam, Srikumar},
  booktitle={ECCV},
  pages={474--490},
  year={2018}
}

@article{kileel2018distortion,
  title={Distortion varieties},
  author={Kileel, Joe and Kukelova, Zuzana and Pajdla, Tomas and Sturmfels, Bernd},
  journal={Foundations of Computational Mathematics},
  volume={18},
  number={4},
  pages={1043--1071},
  year={2018},
  publisher={Springer}
}

@inproceedings{elqursh2011line,
  title={Line-based relative pose estimation},
  author={Elqursh, Ali and Elgammal, Ahmed},
  booktitle={CVPR},
  pages={3049--3056},
  year={2011}}

@article{kileel2017minimal,
  title={Minimal problems for the calibrated trifocal variety},
  author={Kileel, Joe},
  journal={SIAM Journal on Applied Algebra and Geometry},
  volume={1},
  number={1},
  pages={575--598},
  year={2017},
  publisher={SIAM}
}

@article{comport2010real,
  title={Real-time quadrifocal visual odometry},
  author={Comport, Andrew I and Malis, Ezio and Rives, Patrick},
  journal={The International Journal of Robotics Research},
  volume={29},
  number={2-3},
  pages={245--266},
  year={2010},
  publisher={SAGE Publications Sage UK: London, England}
}

@inproceedings{tron2015rigid,
  title={Rigid components identification and rigidity control in bearing-only localization using the graph cycle basis},
  author={Tron, Roberto and Carlone, Luca and Dellaert, Frank and Daniilidis, Kostas},
  booktitle={2015 American Control Conference (ACC)},
  pages={3911--3918},
  year={2015},
  organization={IEEE}
}

@inproceedings{arrigoni2016camera,
  title={Camera motion from group synchronization},
  author={Arrigoni, Federica and Fusiello, Andrea and Rossi, Beatrice},
  booktitle={2016 Fourth International Conference on 3D Vision (3DV)},
  pages={546--555},
  year={2016},
  organization={IEEE}
}

@book{dummit2004abstract,
  title={Abstract Algebra},
  author={Dummit, David S. and Foote, Richard M.},
  edition={3rd},
  year={2004},
  publisher={John Wiley \& Sons}}

\setcounter{page}{\value{page}} 

\clearpage
\setcounter{page}{\value{page}} 
   {
   \newpage
       \onecolumn
        \centering
        \Large
        \vspace{0.5em}Appendices \\
        \vspace{1.0em}
   }
   
\section{Additional Background Material}

\subsection{Line Projection Matrices}
Let $A\in \R^{3\times 4}$ be a projection matrix, factoring as $KR \left[I | -t\right]$  where $R\in SO(3)$ and $t\in \mathbb{R}^3$. Then, the exterior square is
\begin{align*}
    \mathcal{P} = \begin{bmatrix}
        A^2 \wedge A^3 \\
        A^3 \wedge A^1 \\
        A^1 \wedge A^2
    \end{bmatrix}
\end{align*}
where $A^i$ indicates the $i$th row of the camera matrix $A$. Here, $\wedge$ is the wedge product, or the exterior product, between two vectors. The order of the basis is given by the order in the $columnindices$ in the following explanation.  
This can be calculated specifically in the following way. 
First, let $rowindices= [2,3;~1,3;~1,2], \hspace{0.25em} columnindices = [1,2;~1,3;~1,4;~2,3;~2,4;~3,4]$. Then let $\mathcal{P}$ be the $3\times 6$ matrix, where $\mathcal{P}_{i,j}$ is calculated by the minors 
$$\det(A(rowindices(i,:), \hspace{0.25em} columnindices(j,:))$$ if $i=1,3$, and by $$-\det(A(rowindices(i,:), \hspace{0.25em} columnindices(j,:)$$ if $i=2$. When $A$ is a $3n \times 4$ matrix, then let $\mathcal{P}$ denote the $3n\times 6$ matrix of the stacked exterior squares. We refer to \cite{dummit2004abstract} for a more detailed description of the exterior algebra of a vector space. Note that the matrix $\mathcal{P}$ is also the line projection matrices, which can project 3D world lines in plucker coordinates (which are in $\R^6$) to lines in the image plane. We refer to \cite{hartley2003multiple} for more details on the role of line projection matrices in computer vision.

\section{Proofs for Theorems}
\subsection{Block Trifocal Tensor in Collinear Cases}
The reference \cite{miao2024tensor} includes a characterization of the low multilinear rank for the block trifocal tensor under the non-collinear setting. We extend this to the collinear setting, showing that the block trifocal tensor will then have a low multilinear rank of $(5,4,4)$. 
\begin{theorem}\label{thrm:tftcoll}
    Given a block trifocal tensor with appropriately chosen blockwise scales.  If the $n$ cameras that produce $\mathcal{T}^n$ are all collinear and do not all share the same camera center, then mlrank$(\mathcal{T}^n) = (5,4,4)$. 
\end{theorem}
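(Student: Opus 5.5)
We start from the Tucker factorization of Theorem~\ref{thrm:tft}, $\mathcal{T}^n = \G_T \times_1 \mathcal{P}\times_2 C \times_3 C$, which holds for any camera configuration once the blockwise scales are chosen appropriately. Multilinear rank is invariant under invertible transformations in each mode. It also satisfies $\mathrm{mlrank}(\G\times_1 U_1\times_2 U_2\times_3 U_3)_i \le \operatorname{rank}(U_i)$, with equality in mode $i$ whenever all the $U_j$ have full column rank. So the plan is to compute the ranks of $\mathcal{P}$ and $C$ in the collinear case, and then check that the core does not cause a further drop.

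\textbf{Step 1: $\operatorname{rank} C = 4$.} The cameras do not all share a center, so pick $P_i,P_j$ with distinct centers $c_i\neq c_j$. If $C$ had rank $\le 3$, some nonzero $v\in\R^4$ would satisfy $P_k v=0$ for every $k$. Then $v$ would be a common center of all cameras, a contradiction. Hence $C$ has full column rank $4$.

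\textbf{Step 2: $\operatorname{rank}\mathcal{P}=5$.} The rows of the $i$-th block of $\mathcal{P}$ are the wedge products $A^a\wedge A^b$ of rows of $P_i$. Equivalently, the row space of block $i$ is the $3$-dimensional subspace $\wedge^2(\mathrm{row}(P_i))\subset \wedge^2\R^4$. Dually, it consists of the Plücker vectors of lines through $c_i$, i.e.\ those $\omega$ with $\omega\wedge c_i=0$ under the identification $\wedge^2(\R^4)^*\cong\wedge^2\R^4$. Let $L$ be the common line containing all centers, spanned by $c_i,c_j$.
\begin{itemize}
\item \emph{Upper bound.} Every row of $\mathcal{P}$ annihilates the Plücker vector $c_i\wedge c_j$ of $L$, since each row corresponds to a linear form vanishing on $c_k$ wedged with something. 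Concretely, $(A^a\wedge A^b)(c_i\wedge c_j)=\det\begin{pmatrix}A^a c_i & A^a c_j\\ A^b c_i & A^b c_j\end{pmatrix}$, and $c_i,c_j$ span the same 2-plane as $c_k$ and another point. More cleanly: $\mathrm{row}(P_k)=c_k^\perp$, so $\wedge^2 c_k^\perp$ annihilates $c_k\wedge x$ for every $x$, and in particular annihilates $c_i\wedge c_j\propto c_k\wedge x$. Hence $\operatorname{rank}\mathcal{P}\le 5$.
\item \emph{Lower bound.} The two blocks for $P_i,P_j$ already span $\wedge^2 c_i^\perp+\wedge^2 c_j^\perp$. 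This sum has dimension $3+3-\dim(\wedge^2(c_i^\perp\cap c_j^\perp))=6-1=5$.
\end{itemize}
So $\operatorname{rank}\mathcal{P}=5$ exactly.

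\textbf{Step 3: no further drop from the core.} This is the main obstacle. Mode-1 rank is at most $5$ by Step 2, and modes 2 and 3 are at most $4$. For equality, I would write $\mathcal{P}=\mathcal{P}' M$ with $\mathcal{P}'\in\R^{3n\times5}$ of full column rank. Then $\mathcal{T}^n=(\G_T\times_1 M)\times_1\mathcal{P}'\times_2 C\times_3 C$, and all factors now have full column rank. It then suffices to show that the reduced core $\G'=\G_T\times_1 M\in\R^{5\times4\times4}$ has multilinear rank $(5,4,4)$.

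The mode-2 and mode-3 flattenings of $\G_T$ are unaffected in rank by a mode-1 multiplication restricted to the 5-dimensional quotient. To verify this I would use the explicit form of $\G_T$ from \cite{miao2024tensor}: $\G_T$ encodes the pairing $\wedge^2\R^4\otimes\R^4\otimes\R^4\to\wedge^4\R^4$, i.e.\ $\omega\otimes u\otimes v\mapsto \omega\wedge u\wedge v$.

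For mode 1, the flattening of $\G'$ has rank 5 because $M$ is just the restriction of a nondegenerate pairing to a hyperplane. For modes 2 and 3, fix $u\neq0$. The map $(\omega,v)\mapsto\omega\wedge u\wedge v$ with $\omega$ ranging over the 5-dimensional space $W=(c_i\wedge c_j)^{\perp}$ is not identically zero, since $W$ is a hyperplane and the forms $\omega\mapsto\omega\wedge u\wedge v$ for varying $v$ span a 3-dimensional space not contained in a line. Hence no vector in mode 2 or mode 3 kills $\G'$, which gives ranks $4$.

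Alternatively, one can exhibit a specific collinear configuration, e.g.\ three cameras with centers on the $x$-axis, and compute the ranks directly. The ranks are then lower semicontinuous along the irreducible family of collinear configurations, and the upper bounds are exact, so the specific example settles the generic case. The argument above, however, covers all configurations with at least two distinct centers.
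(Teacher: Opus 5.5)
Your proof is correct and follows the same route as the paper. Both start from $\mathcal{T}^n=\G_T\times_1\mathcal{P}\times_2 C\times_3 C$ and show $\operatorname{rank} C=4$ and $\operatorname{rank}\mathcal{P}=5$; the paper gets the latter by noting that $\ker\mathcal{P}$ is spanned by the Pl\"ucker vector of the common line of centers, which is just the dual of your count $\dim(\wedge^2 c_i^\perp+\wedge^2 c_j^\perp)=5$. Your Step 3 goes beyond the paper, which asserts that modes 2 and 3 keep rank 4 because $C$ does without checking that the rank drop of $\mathcal{P}$ cannot propagate through the core; your argument that the nondegenerate wedge pairing, restricted to the hyperplane $W=(c_i\wedge c_j)^\perp$, is never killed by a nonzero mode-2 or mode-3 vector closes that gap.
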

\begin{proof}
    Since it has been shown in the proof for Theorem~\ref{thrm:tucker decomposition} that the stacked camera matrices will only exhibit a low rank if all cameras lie at the same point, it is the same case for the low ranks in the flattenings of the second and third mode. We only need to show the rank drop in the first mode. Note that $\mathcal{T}^n_{(1)} = \mathcal{P} (\mathcal{G}_T)_{(1)}(C \otimes C)^T$, we just need to show that $\mathcal{P}\in \R^{3n \times 6}$ has rank $5$. Since for individual line projection matrices $\mathcal{P}_i$, we have $\mathcal{P}_i \mathcal{L} = 0$ only when $\mathcal{L}$ is a line that passes through the camera center. Since we have a collinear set of cameras, there is only one line that passes through all of the camera centers, hence the kernel of $\mathcal{P}$ has dimension 1. 
\end{proof}

\subsection{Proof for Theorem \ref{thrm:tucker decomposition}}
\begin{proof}
    By expanding equation \eqref{eq:qfromp} and using the definition of the Tucker product
    \begin{align}\label{eq:qdecomp}
        (\mathcal{G}_Q & \times_1 C \times_2  C \times_3 C \times_4 C)_{pqrs} = \sum_{a=1}^4 \sum_{b=1}^4 \sum_{c=1}^4 \sum_{d=1}^4 (\mathcal{G}_Q)_{abcd} C_{ap} C_{bq} C_{cr} C_{ds},
    \end{align} we can see that the core $\mathcal{G}_Q$ admits the following structure: 
    \begin{align*}
    (\mathcal{G}_{Q})_{abcd} = \begin{cases}
        \text{sgn}(abcd) & \text{if } a,b,c,d \in \{1,2,3,4\} \text{ are not all the same}\\
        0 & \text{otherwise}.
    \end{cases}
    \end{align*}
 
    Then, each quadrifocal tensor block $\mathcal{Q}^n_{ijkl}$ calculated through the Tucker decomposition will be the exact same as the quadrifocal tensor calculated using \eqref{eq:qfromp}.
    Now, we establish the full rankness of $C$ in the non-collinear case. Suppose that rank$(C) < 4$, there exists $x\in \R^4$ such that $C x = 0$. This is equivalent to $P_i x = 0$ for any $i = 1,...,n$. However, $P_ix = 0$ only when $x$ is the camera centre of the camera. This means that $x$ is the camera center for all cameras, which is a contradiction. Thus, if the cameras that produce $\mathcal{Q}^n$ don't share the same camera center, the multilinear rank of $\mathcal{Q}^n = (4,4,4,4)$, even in the collinear case.
   
\end{proof}
\subsection{Proof for Theorem \ref{thrm: prank of qft}}
\begin{proof}
    We prove by explicitly calculating the contractions with two vectors. We know $\mathcal{Q}^n$ admits the factorization $\mathcal{Q}^n = \mathcal{G}_Q \times_1 C \times_2 C \times_3 C \times_4 C$, where $\mathcal{G}_Q \in \mathbb{R}^{4 \times 4 \times 4 \times 4}$. 
    Denote P-Rank($T$) as $(M_1,M_2,M_3,M_4,M_5,M_6)$. Calculating the P-Rank involves taking generic linear combinations of the slices of the tensor, which is equivalent to contracting with a matrix $X \in \R^{3n\times 3n}$. For example, for $M_1$ it can be calculated as the rank for $\mathcal{Q}^n_{p12} = \sum_{i=1}^{3n}\sum_{j=1}^{3n} X_{ij} \mathcal{Q}^n_{ij::}$. 
    \begin{itemize}
        \item[(1)] $(\mathbf{M_1})$ Recall that 
        \begin{align*}
            \mathcal{Q}^n_{ijkl} &= (\mathcal{G}_Q \times_1 C \times_2 C \times_3 C \times_4 C)_{ijkl} = \sum_{a=1}^4 \sum_{b=1}^4  \sum_{c=1}^4 \sum_{d=1}^4 \mathcal{G}_{abcd} C_{ia} C_{jb} C_{kc} C_{ld}.
        \end{align*}
        Then 
        \begin{align*}
            (\sum_{i,j}X_{ij}\mathcal{Q}^n_{ij::})_{kl} 
            =  \sum_{i,j}X_{ij} \sum_{a=1}^4 \sum_{b=1}^4  \sum_{c=1}^4 \sum_{d=1}^4 \mathcal{G}_{abcd} C_{ia} C_{jb} C_{kc} C_{ld} =  \sum_{a=1}^4 \sum_{b=1}^4  \sum_{c=1}^4 \sum_{d=1}^4 \mathcal{G}_{abcd} (\sum_{i,j}X_{ij} C_{ia} C_{jb}) C_{kc} C_{ld}.
        \end{align*}  
        Let $S_{ab} = \sum_{i,j}X_{ij} C_{ia} C_{jb}$, or in other words $S = C^TXC$. Then, we  have 
        \begin{align*}
            (\sum_{i,j}X_{ij}\mathcal{Q}^n_{ij::})_{kl} =&  \sum_{a=1}^4 \sum_{b=1}^4  \sum_{c=1}^4 \sum_{d=1}^4 \mathcal{G}_{abcd} S_{ab} C_{kc} C_{ld}  \\
            =&\sum_{c=1}^4 \sum_{d=1}^4 \left(\sum_{a=1}^4 \sum_{b=1}^4   \mathcal{G}_{abcd} S_{ab}\right) C_{kc} C_{ld} = \left(C \left(\sum_{a=1}^4 \sum_{b=1}^4   \mathcal{G}_{abcd} S_{ab}\right) C^T\right)_{kl}.
        \end{align*}
        Here, $\mathcal{G}_S \in \R^{4 \times 4}$ is given by
        \begin{align*}
            \mathcal{G}_S=&\sum_{a=1}^4 \sum_{b=1}^4   \mathcal{G}_{abcd} S_{ab} = \begin{bmatrix}
                0 & S_{34} - S_{43} & S_{42} - S_{24} & S_{23} - S_{32} \\
                S_{43} - S_{34} & 0 & S_{14} - S_{41} & S_{31} - S_{13}\\
                S_{24} - S_{42} & S_{41}-S_{14} & 0 & S_{12} - S_{21}\\
                S_{32} - S_{23} & S_{13} - S_{31} & S_{21} - S_{12} & 0
            \end{bmatrix} 
        \end{align*}
        By our definition of projection rank, $\text{rank}(X)=1$, so $X = x y^T$ for some generic $x,y \in \R^4$. In general, we can explicitly calculate the rank of $\G_S$, where the reduced row echelon form is 
        \begin{align*}
            \text{rref}(\G_S) = 
            &\begin{bmatrix}
                1 & 0 & -(x_1y_4 - x_4y_1)/(x_3y_4-x_4y_3) & -(x_1y_3 - x_3y_1)/(x_3y_4-x_4y_3) \\
                0 & 1 & -(x_2y_4 - x_4y_2)/(x_3y_4-x_4y_3) & -(x_2y_3 - x_3y_2)/(x_3y_4-x_4y_3) \\
                0 &0 & 0 & 0\\
                0 & 0 & 0 & 0
            \end{bmatrix}
        \end{align*}
        when $x_3y_4 - x_4y_3 \not = 0$. Thus, for generic $X$, $\text{rank}(\G_S) = 2$, so that $\text{rank}(C\G_SC^T) = 2$, proving the projection rank for $M_1$ to be $2$. This has also been checked numerically and symbolically. 
        
        \item[(2)] $\mathbf{M_2},~\mathbf{M_3},~\mathbf{M_4},~\mathbf{M_5},~\mathbf{M_6}$: Due to symmetries, $M_2,...,M_6$ will have the same ranks and properties. We omit the details. 
    \end{itemize}
    This establishes Theorem~\ref{thrm: prank of qft}.
\end{proof}

\subsection{Proof for Proposition \ref{prop:additional properties}}
\begin{proof}
    These are mainly facts from \cite{hartley2003multiple}, but we still describe the rationale and how it corresponds to the descriptions in the reference. Recall that the quadrifocal tensor can be calculated via the following formula \eqref{eq:qfromp} given four cameras $P_i,P_j,P_k,P_l$:
\begin{equation*}
    (\mathcal{Q}^n_{ijkl})_{pqrs} = \det 
    \begin{bmatrix}
        P_i^p \\
        P_j^q \\
        P_k^r \\
        P_l^s\\
    \end{bmatrix}.
\end{equation*}
\begin{enumerate}
    \item[(1)] For $\mathcal{Q}^n_{iiii}$ diagonal blocks, all four cameras will be the same in the definition for quadrifocal tensors from camera matrices. Since $P_i$ only has $3$ rows, there must be a repeating row in the determinant, meaning that $\mathcal{Q}^n_{iiii} = 0$ in $\mathbb{R}^{3 \times 3 \times 3 \times 3}$. 
    \item[(2)] Suppose that we now have $3$ overlapping indices, such that we are considering the blocks $\mathcal{Q}^n_{iiij},\mathcal{Q}^n_{iiji},\mathcal{Q}^n_{ijii},\mathcal{Q}^n_{jiii}$ for $i\not = j$. Then, among the four rows in the determinant, three must correspond to the same camera $i$. By \cite{hartley2003multiple}, these entries will be the epipoles up to signs, where epipoles are the images of the camera center of view $i$ in view $j$.
    \item[(3)] Suppose that we now have $2$ overlapping indices, such that we are looking at the blocks $\mathcal{Q}^n_{iijk}$, $\mathcal{Q}^n_{ijik}$, $\mathcal{Q}^n_{ijki}$, $\mathcal{Q}^n_{jiik}$, $\mathcal{Q}^n_{jiki}$, $\mathcal{Q}^n_{jkii}$, $\mathcal{Q}^n_{iikj}$, $\mathcal{Q}^n_{ikij}$, $\mathcal{Q}^n_{ikji}$, $\mathcal{Q}^n_{kiij}$, $\mathcal{Q}^n_{kiji}$, $\mathcal{Q}^n_{kjii}$ for $i\not = j, i\not = k, j\not = k$. Then, among the four rows in the determinant, two correspond to the same camera $i$. Again by \cite{hartley2003multiple}, these entries will be the elements of the trifocal tensor up to signs. 
    \item[(4)]Suppose that we now have $2$ overlapping indices, and the other two are also the same, such that we are looking at the blocks, $\mathcal{Q}^n_{iijj},\mathcal{Q}^n_{ijij},\mathcal{Q}^n_{ijji},\mathcal{Q}^n_{jiij},\mathcal{Q}^n_{jiji},\mathcal{Q}^n_{jjii}$ for $i\not = j$.  
    Then, among the four rows in the determinant, two correspond to the same camera $i$, and the other two correspond to the same camera $j$. Again by \cite{hartley2003multiple}, these entries will be the elements of the fundamental matrices up to signs. 
\end{enumerate}
This completes the proof.
\end{proof}

\subsection{Proof for Theorem \ref{thrm:quadscales}}
\begin{proof}
Blockwise multiplication by a rank-1 tensor with non-vanishing entries preserves multilinear rank, as this operation is equivalent to a Tucker product with invertible diagonal matrices.

Hence, without loss of generality, assume $\lambda_{i111} = \lambda_{1j11} = \lambda_{11k1} = \lambda_{111\ell} = 1$ for all $i,j,k, \ell \in \{2, \ldots, n\}$. We will show below that for some $c \in \mathbb{R}^*$, the entries satisfy:
\begin{itemize}
    \item $\lambda_{ijk\ell} = c$ if exactly two of $i,j,k,\ell$ equal $1$;
    \item $\lambda_{ijk\ell} = c^2$ if exactly one of $i,j,k$ equals $1$;
    \item $\lambda_{ijk\ell} = c^3$ if none of $i,j,k,\ell$ equal $1$ and the indices are not all the same.
\end{itemize}
This result will establish the theorem, since setting $\alpha = \beta = \gamma = (1, c, \ldots, c)$ and $\delta = (\tfrac{1}{c}, 1, \ldots, 1)$ ensures $\lambda_{ijk\ell} = \alpha_i \beta_j \gamma_k \delta_{\ell}$ whenever $i,j,k, \ell$ are not all equal.

We let $\mathcal{Q}^n_{(1)}$ and $(\lambda \odot_b \mathcal{Q}^n)_{(1)}$ denote the mode-$1$ matrix flattenings in $\mathbb{R}^{3n \times 27n^3}$ of the block quadrifocal tensor and its scaled variant, where rows correspond to the first mode of the tensors. Invoking Theorem~\ref{thrm:tucker decomposition} alongside our standing assumptions shows that both matrices have rank $4$. Consequently, every $5 \times 5$ minor of these matrices must be zero.

We will exploit this by analyzing specific $5 \times 5$ submatrices of $(\lambda \odot_b \mathcal{Q}^n)_{(1)}$ to construct a system of constraints on $\lambda$, ultimately establishing the existence of the constant $c$.
We adopt the index convention $(ip)$ for rows and $(jq, kr, \ell s)$ for columns, where $i,j,k,\ell \in [n]$ and $p,q,r,s \in [3]$. Thus, entries satisfy $((\lambda \odot_b \mathcal{Q}^n)_{(1)})_{(ip), (jq, kr, \ell s)} = \lambda_{ijk\ell} (\mathcal{Q}^n_{ijk \ell})_{pqrs}$.

\textbf{Case 1:} The first submatrix of $(\lambda \odot_b \mathcal{Q}^n)_{(1)}$ we consider has column indices
$(i1, 13, 12)$, 
$(12, j2, 11)$,
$(12, j3, 11)$,
$(13, j3, 12)$,
$(11, j1, 13)$
and row indices 
$(11)$,
$(12)$,
$(13)$,
$(i1)$,
$(i2)$, 
where $i, j \in \{2, \ldots, n\}$.
Explicitly, the submatrix \nolinebreak is  
\begin{equation*}
\begin{bmatrix}
(\mathcal{Q}^n_{1i11})_{1132} & (\mathcal{Q}^n_{11j1})_{1221} & (\mathcal{Q}^n_{11j1})_{1231} & (\mathcal{Q}^n_{11j1})_{1332} & (\mathcal{Q}^n_{11j1})_{1113} \\[5pt] 
(\mathcal{Q}^n_{1i11})_{2132} & (\mathcal{Q}^n_{11j1})_{2221} & (\mathcal{Q}^n_{11j1})_{2231} & (\mathcal{Q}^n_{11j1})_{2332} & (\mathcal{Q}^n_{11j1})_{2113} \\[5pt]
(\mathcal{Q}^n_{1i11})_{3132} & (\mathcal{Q}^n_{11j1})_{3221} & (\mathcal{Q}^n_{11j1})_{3231} & (\mathcal{Q}^n_{11j1})_{3332} & (\mathcal{Q}^n_{11j1})_{3113} \\[4pt]
\lambda_{ii11}(\mathcal{Q}^n_{ii11})_{1132} & \lambda_{i1j1}(\mathcal{Q}^n_{i1j1})_{1221} & \lambda_{i1j1}(\mathcal{Q}^n_{i1j1})_{1231} & \lambda_{i1j1}(\mathcal{Q}^n_{i1j1})_{1332} & \lambda_{i1j1}(\mathcal{Q}^n_{i1j1})_{1113} \\[4pt]
\lambda_{ii11}(\mathcal{Q}^n_{ii11})_{2132} & \lambda_{i1j1}(\mathcal{Q}^n_{i1j1})_{2221} & \lambda_{i1j1}(\mathcal{Q}^n_{i1j1})_{2231} & \lambda_{i1j1}(\mathcal{Q}^n_{i1j1})_{2332} & \lambda_{i1j1}(\mathcal{Q}^n_{i1j1})_{2113}
\end{bmatrix},
\end{equation*}
which we abbreviate as 
\begin{equation} \label{eq:my-det1}
\begin{bmatrix}
\ast & \ast & \ast & \ast & \ast \\
\ast & \ast & \ast & \ast & \ast \\
\ast & \ast & \ast & \ast & \ast \\
\lambda_{ii11}\ast & \lambda_{i1j1}\ast & \lambda_{i1j1}\ast & \lambda_{i1j1}\ast & \lambda_{i1j1}\ast \\
\lambda_{ii11}\ast & \lambda_{i1j1}\ast & \lambda_{i1j1}\ast & \lambda_{i1j1}\ast & \lambda_{i1j1}\ast \\
\end{bmatrix},
\end{equation}
with asterisk denoting the corresponding entry in $\mathcal{Q}^n_{(1)}$. 
We view the determinant of \eqref{eq:my-det1} as a polynomial with respect to $\lambda$.
It has degree $\leq 2$ in the variables $\lambda_{ii11}, \lambda_{i1j1}$. 
Observe that if $\lambda_{i1j1} = 0$, the bottom two rows of the matrix are linearly independent.  
Also if $\lambda_{i1j1} - \lambda_{ii11} = 0$, then 
\eqref{eq:my-det1} equals a $5 \times 5$ submatrix of $\mathcal{Q}^n_{(1)}$ with rows operations performed; therefore \eqref{eq:my-det1} is rank-deficient.
It follows that the determinant of \eqref{eq:my-det1} takes the form
\begin{equation*}
s \lambda_{i1j1}(\lambda_{i1j1} - \lambda_{ii11}).
\end{equation*}
Here the scale $s = s(P_1, P_i, P_j)$ is a polynomial in the camera matrices. 
Due to polynomiality, $s$ is nonzero Zariski-generically if we can exhibit a \textit{single} instance of matrices $P_1, P_i, P_j$ where the determinant of \eqref{eq:my-det1} does not vanish identically for all $\lambda_{i1j1}, \lambda_{ii11}$.
Furthermore, we just need an instance with $i = j$, as this corresponds to a specialization of the case $i \neq j$.  
Computational verification with a random numerical instance of $P_1, P_i$ proves the non-vanishing.  
Recalling the standing assumptions, we deduce 
$\lambda_{i1j1} = \lambda_{ii11}$.

We apply the same argument to modewise permutations of $\lambda \odot_b \mathcal{Q}^n$ and $\mathcal{Q}^n$, and obtain
\begin{equation*}\label{eq:nice-1}
\lambda_{\pi(i1j1)} = \lambda_{\pi(ii11)} \quad \text{for all } i, j \in \{2, \dots, n\} \text{ and permutations } \pi.
\end{equation*}
The argument goes through as $\pi \cdot \mathcal{Q}^n$ and $\pi \cdot (\lambda \odot_b \mathcal{Q}^n)$ have multilinear ranks bounded by $(4,4,4,4)$ and from equation \ref{eq:symmetry}.  
So \eqref{eq:my-det1} looks the same but with indices permuted and possibly a sign flip.

We now see that $\lambda$-entries with  two $1$-indices agree.
Indeed, taking $i = j$ above gives $\lambda_{\pi_1(i1i1)} = \lambda_{\pi_2(ii11)}$ for all $\pi_1$ and $\pi_2$ that fix $(ii11)$ and $(i1i1)$ respectively.  
So $\lambda_{ii11} = \lambda_{\pi(ii11)}$ for all $\pi$.  
Taking $i \neq j$ gives $\lambda_{ii11} = \lambda_{\pi(i1j1)} = \lambda_{jj11}$ for all $\pi$.  
Together, there exists $c \in \mathbb{R}^*$ such that 
$c = \lambda_{\pi(ij11)}$ for all $i, j \in \{2, \dots, n\}$ and permutations \nolinebreak $\pi$.

\textbf{Case 2:}  
Next we consider the submatrix of $(\lambda \odot_b \mathcal{Q}^n)_{(1)}$ with column indices  
$(j1, k3, 12)$, 
$(12, j2, 11)$,
$(12, j3, 11)$,
$(13, j3, 12)$,
$(11, j1, 13)$
and row indices 
$(11)$,
$(12)$,
$(13)$,
$(i1)$,
$(i2)$, 
where $i, j, k \in \{2, \ldots, n\}$.
It looks like 
\begin{equation} \label{eq:my-det2}
\begin{bmatrix}
c\ast & \ast & \ast & \ast & \ast \\
c\ast & \ast & \ast & \ast & \ast \\
c\ast & \ast & \ast & \ast & \ast \\
\lambda_{i j k 1}\ast & c\ast & c\ast & c\ast & c\ast \\
\lambda_{i j k 1}\ast & c\ast & c\ast & c\ast & c\ast \\
\end{bmatrix},
\end{equation}
where asterisks denote corresponding entries in $\mathcal{Q}^n_{(1)}$.
As a polynomial in $c$ and $\lambda_{i j k 1}$, the determinant of \eqref{eq:my-det2} is a scalar multiple of $c(c^2 - \lambda_{i j k 1})$.
This is because the polynomial has degree $\leq 3$, if $c=0$ then the bottom two rows of \eqref{eq:my-det2} are linearly dependent, and if $c^2 = \lambda_{i j k 1}$ then \eqref{eq:my-det2} is a $5 \times 5$ submatrix of $\mathcal{Q}^n_{(1)}$ with row and column operations performed.  
The scale is a polynomial in $P_1, P_i, P_j, P_k$.
It is Zariski-generically nonzero if we exhibit one instance of camera matrices such that the determinant of \eqref{eq:my-det1} does not vanish for all $c, \lambda_{i j k 1}$.
Further, it suffices to find an instance where $i = j = k$, as all other cases specialize to this.  Computational verification with a random numerical instance of $P_1, P_i$ proves the non-vanishing.  
It follows that $c^2 = \lambda_{i j k 1}$. 
Appealing to symmetry like before, 
$
c^2 = \lambda_{\pi(i j k 1)}
$
for all $i, j, k \in \{2, \dots, n\}$ and permutations $\pi$.  
Summarizing, all $\lambda$-entries with a single $1$-index equal $c^2$.

\textbf{Case 3:}
Consider  the submatrix of $(\lambda \odot \mathcal{Q}^n)_{(1)}$ with columns 
$(j1, k3, \ell 2)$, 
$(12, i2, 11)$,
$(12, i3, 11)$,
$(13, i3, 12)$,
$(11, i1, 13)$
and rows 
$(11)$,
$(12)$,
$(13)$,
$(i1)$,
$(i2)$, 
where $i, j, k, \ell \in \{2, \ldots, n\}$ and $i, \ell$ are distinct.
The submatrix looks like 
\begin{equation} \label{eq:my-det3}
\begin{bmatrix}
c^2\ast & \ast & \ast & \ast & \ast \\
c^2\ast & \ast & \ast & \ast & \ast \\
c^2\ast & \ast & \ast & \ast & \ast \\
\lambda_{i j k \ell}\ast & c\ast & c\ast & c\ast & c\ast \\
\lambda_{i j k \ell}\ast & c\ast & c\ast & c\ast & c\ast \\
\end{bmatrix}.
\end{equation}
The determinant of \eqref{eq:my-det3} is $c(c^3 - \lambda_{i j k \ell})$ multiplied by a polynomial in $P_1, P_i, P_j, P_k, P_{\ell}$.
The most specialized case is $i=j=k$.  Computer verification with a random numerical instance proves the polynomial is not identically zero.
We deduce that $c^3 = \lambda_{i j k \ell}$.  
By symmetry, $c^3 = \lambda_{\pi(i j k \ell)}$ for all $i, j, k, \ell \in \{2, \dots, n\}$ with $i, \ell$ distinct and all permutations $\pi$.
In other words, $\lambda$-entries with no $1$-indices and non-identical indices equal $c^3$.

Putting cases 1, 2 and 3 together shows that  $\lambda$ takes the announced form.  This proves Theorem~\ref{thrm:quadscales}.
\end{proof}

\subsection{Proof for Theorem \ref{thrm:ess decomposition}}
\begin{proof} The proof of this theorem follows from explicit calculation. We just need to verify that  \begin{equation*}
        \mathcal{E}^n_{ij} = \mathcal{P}_i \begin{pmatrix}
            0 & 0 & 0 & 0 & 0 & 1\\
            0 & 0 & 0 & 0 & 1 & 0\\
            0 & 0 & 0 & -1 & 0 & 0\\
            0 & 0 & -1 & 0 & 0 & 0\\
            0 & 1 & 0 & 0 & 0 & 0\\
            -1 & 0 & 0 & 0 & 0 & 0
        \end{pmatrix} \mathcal{P}_j^T.
\end{equation*}

For each entry $(\mathcal{E}^n_{ij})_{kl}$ in $\mathcal{E}^n_{ij}$, 
\begin{align*}
    (\mathcal{E}^n_{ij})_{kl} &= (-1)^{k+l} \det \begin{bmatrix}
        \sim P_i^l \\
        \sim P_j^k
    \end{bmatrix} = (\mathcal{P}_i)_k \begin{pmatrix}
            0 & 0 & 0 & 0 & 0 & 1\\
            0 & 0 & 0 & 0 & 1 & 0\\
            0 & 0 & 0 & -1 & 0 & 0\\
            0 & 0 & -1 & 0 & 0 & 0\\
            0 & 1 & 0 & 0 & 0 & 0\\
            -1 & 0 & 0 & 0 & 0 & 0
        \end{pmatrix} (\mathcal{P}_j)_l^T
\end{align*}
by expanding every entry. A direct expansion shows the result. This has also been checked numerically.
\end{proof}
\subsection{Proof for Theorem \ref{thrm:tft p rank}}
\begin{proof}
    We prove this by explicit calculation. The block tensor $\mathcal{T}^n$ admits the factorization $\mathcal{T}^n = \mathcal{G} \times_1 \mathcal{P} \times_2 C \times_3 C$, where $\mathcal{G} \in \mathbb{R}^{6\times 4 \times 4}$. Denote P-Rank($\mathcal{T}^n$) as $(M_1, M_2, M_3)$. Calculating the P-Rank involves taking genericlinear combinations of the slices of the tensor, this is equivalent to contracting with a vector. In the tensor-matrix product, let $x\in \mathbb{R}^{3n\times 1}$, $\sum_{i=1}^{3n} x_i \mathcal{T}^n_{ijk} = \mathcal{G} \times_1 (x^T P) \times_2 C \times_3 C$. Then we explicitly calculate $M_1$ from $\mathcal{G} \times_1 (x^T P)$. The same process holds for modes 2 and 3 to compute $M_2$ and $M_3$ in the P-Rank. 
    \begin{itemize}
        \item[(1)]$\mathbf{M_1}$: Let $y = x^T P $. We have 
        $$\mathcal{T}^n_1 = \sum_{i=1}^4 y_i \mathcal{G}_{ijk} =         \begin{bmatrix}
            0  & y_6 & -y_5 & y_4 \\
            -y_6 & 0 & y_3 & -y_2 \\
            -y_5 & -y_3 & 0 & y_1 \\
            -y_4 & y_2 & -y_1 & 0
        \end{bmatrix}.$$
        Generically, this matrix $\mathcal{T}^n_1$ will clearly have rank $4$. 
        \item[(2)]$\mathbf{M_2}$: Let $y = x^T C$, then we have 
        \begin{align*}
            \mathcal{T}^n_2 = \sum_{j=1}^4 y_j \mathcal{G}_{ijk} =  \begin{bmatrix}
            0 &0&0&y_4&-y_3&y_2\\
            0 & -y_4 & y_3 & 0 & 0 & -y_1 \\
            y_4 & 0 & -y_2 & 0 & y_1 & 0 \\
            -y_3 & y_2 & 0 & -y_1 & 0 & 0 
        \end{bmatrix}
        \end{align*}
        This matrix generically will clearly have rank $3$. Denote $R_i$ as the $i$-th row of $\mathcal{T}^n_2$. We can observe that the fourth row is a linear combination of rows 1,2,3, where $y_4 R_4 = y_1 R_1 + y_2 R_2 + y_3 R_3$. However, $R_1,R_2,R_3$ must be linearly independent in general. 
        \item[(3)] $\mathbf{M_3}$: This is the same as $M_2$ due to symmetry. 
    \end{itemize}
\end{proof}

\section{Algorithm Details}
\subsection{QuadSync}
We present pseudocode for QuadSync in Algorithm \ref{alg:quadsync} in this section. 
\begin{algorithm}[htp!]
    \caption{Quadsync IRLS-ADMM}\label{alg:quadsync}
    \begin{algorithmic}
        \State \textbf{Input: } $\mathcal{Q}^n\in \R^{3n\times 3n \times 3n \times 3n}$, $\rho>0 \in \R$
        \State \quad \quad \quad $\Omega$ set of observed block indices 
        \State \textbf{Output: } $\bar{C} \in \R^{3n\times 4}$
        \State Normalize $\mathcal{Q}^n$ so that each block has norm 1  
        \State Obtain $C_i$, $B$ from first four singular vectors in first factor matrix of HOSVD$(\mathcal{Q}^n)$
        \State Calculate initial IRLS weights $W_Q$ from \eqref{weights}
        \While{not converged}
        \State \textit{\# Now we run the ADMM cases}
            \While{not converged}
                \State \textit{\# First optimize for $C_i,\Lambda$ alternatingly}
                \While{not converged}
                    \State Update $C_i$, $i = 1, 2, 3, 4$ via \eqref{Cupdate}
                    \State Update $\Lambda_Q$ via \eqref{Lupdate}
                \EndWhile
                \State Update $B$ via \eqref{Bupdate}
                \State Update $\Gamma_i$, $i=1, 2, 3, 4$ via \eqref{Gupdate}
            \EndWhile
            \State Update new IRLS weights $W_Q$ via \eqref{weights}
        \EndWhile
        \State $\bar{C} = \text{avg}(C_1, C_2, C_3, C_4)$
        \State \textbf{Return}
    \end{algorithmic}
\end{algorithm}

We count the storage and flop complexities of QuadSync. Let $n$ be the number of cameras. Then the input of QuadSync consists of a block quadrifocal tensor with up to $81n^4$ elements. 
The auxiliary variables consist of $C_1,~C_2,~C_3,~C_4,~\Lambda,~W,~B,~\Gamma_1,~\Gamma_2,~\Gamma_3$, with size $12n \times 8 + 2|\Omega| = O(n^4)$. Thus, the overall storage complexity is $O(n^4)$.

For the flop complexity, we first account for the initialization. For the initialization, $C$ is calculated via HOSVD of a matrix of size $3n \times 27n^3$ where the target rank is $4$. This costs costs $O(4 \times 81 \times n^4) = O(n^4)$ flops. For the running  iterations, to calculate $C$, one has to multiply a matrix of size $4\times 27n^3$ with a diagonal matrix of size $27n^3 \times 27n^3$, then take the product and multiply it with another matrix of size $27n^3 \times 4$. This multiplication of three matrix will have to be done for a total of $n$ times for each update of the stacked camera matrices $C$. Thus, the flop complexity for calculating the updates for $C$ is $O(n^4)$. Calculating $\Lambda_Q,~W$ also requires $O(|\Omega|) = O(n^4)$ flops. Calculating $B,~\Gamma_i$ requires $O(n)$ flops. The overall flop complexity is $O(n^4)$. 

\subsection{Joint Opt.}
Though similar to the above, we include  detailed calculations for the steps of Joint Opt. and include  pseudocode in Algorithm~\ref{alg:jointopt}. We first give explicit formulas for all the updates that we use. 

\begin{itemize}
\item[(1)] $\mathbf{W_Q}, ~\mathbf{W_T},~\mathbf{W_E}:$
Here $W$ is the set of IRLS weights integrated in the norm, so
\begin{align*} w_{ijkl} = \begin{cases}
    1 / m^t_{ijkl} \quad \text{ if } (i,j,k,l)\in \Omega\\
    0 \quad \quad \quad \quad \quad \text{otherwise}.
\end{cases} \end{align*}
For $W_Q$ we have 
\begin{align}\label{jointWeightsQ}
    (m_Q)^t_{ijkl} = \max(\delta,~sqrt(\|(\Lambda_Q)^{(t-1)}_{ijkl} \tilde{\mathcal{Q}}^n_{ijkl} - \G_Q \times_1 (C_1)^{(t-1)}_i  \times_2 (C_2)^{(t-1)}_j \times_3 (C_3)^{(t-1)}_k \times_4 (C_4)^{(t-1)}_l)\|_F).
\end{align}

For $W_T$ we have 
\begin{align}\label{jointWeightsT}
    (m_T)^t_{ijkl} = \max(\delta,~sqrt(\|(\Lambda^{(t-1)}_T)_{ijkl} \tilde{\mathcal{T}}^n_{ijkl} - \G_T \times_1 (P_1)^{(t-1)}_i \times_2 (C_5)^{(t-1)}_j \times_3 (C_6)^{(t-1)}_k \|_F).
\end{align}

For $W_E$ we have
\begin{align}\label{jointWeightsE}
    (m_E)^t_{ijkl} = \max(\delta,~sqrt(\|(\Lambda_E)^{(t-1)}_{ijkl} \tilde{\mathcal{E}}^n_{ijkl} - \G_E  \times_1 (P_2)^{(t-1)}_i   \times_2 (P_3)^{(t-1)}_j \|_F).
\end{align}

\item[(2)]$\mathbf{C_1},~\mathbf{C_2},~\mathbf{C_3},~\mathbf{C_4}:$ For the $i$th mode, let $C_{\sim i} = C_4 \otimes \cdots \otimes C_{i+1 }\otimes C_{i-1} \otimes \cdots \otimes C_1$, and let $K = (\G_Q)_{(i)} C_{\sim i}^T $. Then, let $x_j$ denote the $i$th row of the variable being optimized for. Then
    \begin{align} \label{jointC1update}
        x_j  = (\frac{\rho}{2}(B-\Gamma_i)_j + (\frac{1}{n_Q}(W^2_{(i)})_j \odot_b [(\Lambda \odot_b \tilde{\mathcal{Q}}^n)_{(i)}]_{j}) K^T)( \frac{\rho}{2}I_{4\times 4} + \frac{1}{n_Q} K \text{diag}((W^2_{(i)})_j) K^T)^{-1}.
    \end{align}
\item[(3)]$\mathbf{C_5},~\mathbf{C_6},~\mathbf{P_1}:$ To solve for $C_i, ~i=5,6$, let $C_{\sim i} = P \otimes C_j$, and let $K = (\G_T)_{(i)} C_{\sim i}^T $. Then, we have the update rule for $C_i$ as 
\begin{align}\label{jointC5update}
    x_j = (\frac{\rho}{2}( B -\Gamma_i)_j + \frac{1}{n_T}((W_T^2 \odot L_T) \odot_b T)_j K^T)(\frac{\rho}{2}I_{4\times 4} + \frac{1}{n_T} K \text{diag}((W_T)_j^2)K^T)^{-1}.
\end{align}
For $P_1$, let $K = (\G_T)_{(3)}(C_2 \otimes C_1)^T$. Then
\begin{align}\label{jointP1update}
    x_j = (\frac{\rho}{2}( D -\tau_i)_j + \frac{1}{n_T}((W_T^2 \odot L_T) \odot_b T)_j K^T)(\frac{\rho}{2}I_{6\times 6} + \frac{1}{n_T} K \text{diag}(W_T^2)K^T)^{-1}.
\end{align}

\item[(4)]$\mathbf{P_2},~\mathbf{P_3}:$ To solve for $P_i,~i=2,3$, let $K = (\G_E)_{(i)} P_{j}^T $ where $j\not = i$. Then, we have the update rule for $P_i$ as  \begin{align}\label{jointP2update}
    x_j = (\frac{\rho}{2}( D -\tau_i)_j + \frac{1}{n_T}((W_E^2 \odot L_E) \odot_b E)_j K^T)(\frac{\rho}{2}I_{6\times 6} + \frac{1}{n_E} K \text{diag}(W_E^2)K^T)^{-1}.
\end{align}

\item[(5)]$\mathbf{\Lambda_Q},~\mathbf{\Lambda_T},~\mathbf{\Lambda_E}:$ For $\Lambda$, we can solve as usual by directly solving the convex optimization problem associated with $f_Q,~g_T,~h_E$,
    This is done by solving for each block separately, where 
    \begin{align}\label{jointLqupdate}
    (\Lambda_Q)_{ijkl} = \frac{trace((\llbracket \mathcal{G}_Q;~C_1,~C_2,~C_3,~C_4 \rrbracket_{ijkl})_{(1)}^T ((\tilde{\mathcal{Q}}^n)_{ijkl}))_{(1)})}{\|((\tilde{\mathcal{Q}}^n)_{ijkl})_{(1)}\|_F^2},
    \end{align}

\begin{align}\label{jointLtupdate}
    (\Lambda_T)_{ijk} = \frac{trace((\llbracket \mathcal{G}_T;~P_1,~C_5,~C_6 \rrbracket_{ijk})_{(1)}^T ((\tilde{\mathcal{T}}^n)_{ijk}))_{(1)})}{\|((\tilde{\mathcal{T}}^n)_{ijk})_{(1)}\|_F^2},
    \end{align}

\begin{align}\label{jointLeupdate}
    (\Lambda_E)_{ij} = \frac{trace((\llbracket \mathcal{G}_E;~P_2,~P_3\rrbracket_{ij})_{(1)}^T ((\tilde{\mathcal{E}}^n)_{ij}))_{(1)})}{\|((\tilde{\mathcal{E}}^n)_{ij})_{(1)}\|_F^2}.
    \end{align}
    
After calculating $\Lambda_Q,~\Lambda_T,~\Lambda_E$, for each of them, we first symmetrize, then normalize $\Lambda$ so that $\|\Lambda\|_F^2 = 1$ and $\Lambda$ satisfies the required symmetry requirements.
 
\item[(6)]$\mathbf{B},\mathbf{D}:$ For $B$, we can solve directly, where 
    \begin{equation}\label{jointBupdate}
        B = \frac{1}{6} \left( \sum_{i=1}^6 C_i + \Gamma_i \right).
    \end{equation}
    For $D$, we also solve directly, where
     \begin{equation}\label{jointDupdate}
        D = \frac{1}{3} \left( \sum_{i=1}^3 P_i + \tau_i \right).
    \end{equation}
    
\item[(7)]$\mathbf{\Gamma_i},~\mathbf{\tau_i}:$ For the final ascent cases for $\Gamma_i,~\tau_i$, we set \begin{equation}\label{jointGupdate}
    \Gamma_{i} = \Gamma_i + (C_i - B),
    \end{equation}
    \begin{equation}\label{jointTauupdate}
    \tau_{i} = \tau_i^{(k)} + (P_i - D).
    \end{equation}
\end{itemize}

\begin{algorithm}[htp!]
    \caption{Joint Opt. IRLS-ADMM}\label{alg:jointopt}
    \begin{algorithmic}
        \State \textbf{Input: } $\mathcal{Q}^n\in \R^{3n\times 3n \times 3n \times 3n}$, $\mathcal{T}^n\in \R^{3n\times 3n \times 3n}$, $\rho>0$ in $\R$
        \State \quad \quad \quad $\mathcal{E}^n\in \R^{3n\times 3n \times 3n}$, $\Omega$ observed indices 
        \State \textbf{Output: } $\bar{C} \in \R^{3n\times 4}$
        \State Normalize $\mathcal{Q}^n,~ \mathcal{T}^n,~\mathcal{E}^n$ so that each block has norm 1  
        \State Calculate $n_Q,~n_T,~n_E$ as the number of estimated blocks respectively. 
        \State Obtain $C_i,~B$ from first four singular vectors in first factor matrix of  HOSVD$(\mathcal{Q}^n)$
        \State Calculate initial IRLS weights $W_Q,~W_T,~W_Q$ from \eqref{jointWeightsQ}, \eqref{jointWeightsT}, \eqref{jointWeightsE}. 
        \While{not converged}
        \State \textit{\# Now we run the ADMM steps}
            \While{not converged}
                \State \textit{\# First optimize for $C_i,~P_i,~\Lambda
                _Q,~\Lambda_T,~\Lambda_E$ alternatingly}
                \While{not converged}
                    \State Update all $C_i$, $P_j$,  for $j=1,2,3$ via \eqref{jointC1update}, \eqref{jointC5update}, \eqref{jointP1update},  \eqref{jointP2update}
                    \State Update $\Lambda_Q,~\Lambda_T,~\Lambda_E$ via \eqref{jointLqupdate}, \eqref{jointLtupdate}, \eqref{jointLeupdate}
                \EndWhile
                \State Update $B$ via \eqref{jointBupdate}
                \State Update $D$ via \eqref{jointDupdate}
                \State Update $\Gamma_i$ via \eqref{jointGupdate}
                \State Update $\tau_i$ via \eqref{jointTauupdate}
            \EndWhile
            \State Update new IRLS weights $W_Q,~W_T,~W_E$ via \eqref{jointWeightsQ}, \eqref{jointWeightsT}, \eqref{jointWeightsE}
        \EndWhile
        \State $\bar{C} = \text{avg}(C_1,~C_2,~C_3,~C_4)$
        \State \textbf{Return}
    \end{algorithmic}
\end{algorithm}

We note that in the joint optimization, we only constrain the equality between $C_i$'s and $P_i$'s. However, there is also a relationship between $C_i$ and $P_i$, where $P_i$'s elements should be the $2\times 2$ minors of $C_i$. In future work, this constraint could be further explored to potentially improve the algorithm and the linkage between the factor matrices and the three block entities. 
\section{Additional Experiments}
\subsection{Randomized Updates in QuadSync}
We test the effect of randomized updates for $C_i$ in QuadSync for the ETH3D `relief' dataset. Our `relief' dataset contains 13 images, and the maximum number of columns for updating $C_i$ is $27 \times 13^3 = 59319$. We try randomized updates using columns whose number range from 20 to 59319. Using 30 random columns achieve similar accuracy with notable speed-ups. See Figure~\ref{fig:rebuttal_speed} for the effect of the number of columns on accuracy and runtime.  
\begin{figure}[htp!]
    \centering
    \includegraphics[width=0.8\linewidth]{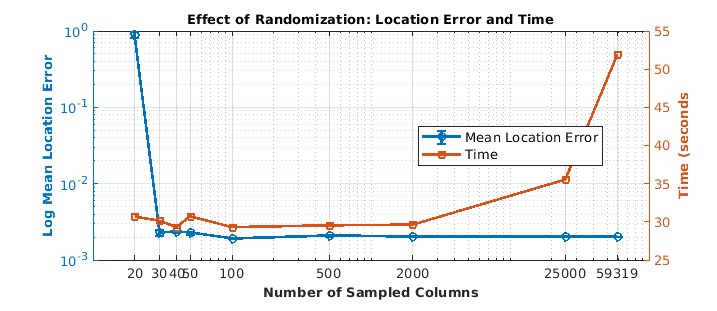}
    \caption{Randomized updates in QuadSync tested on ETH3D `relief' dataset.}
    \label{fig:rebuttal_speed}
\end{figure}

\subsection{Collinear Experiments}
\begin{figure}[htpb!]
    \centering
    \includegraphics[width=0.7\linewidth]{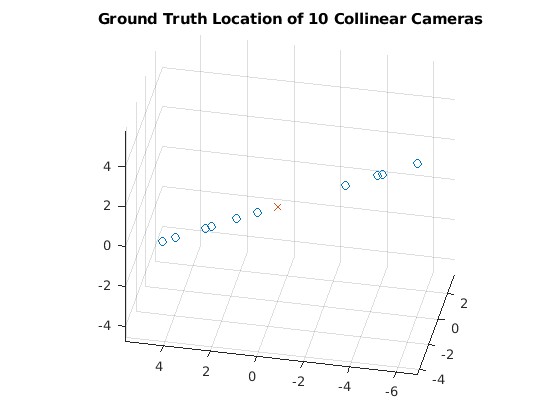}
    \caption{Ground truth location of 10 cameras.  {\color{red} $\times$}} {\begin{small} is the origin.\end{small}} {\color{blue} $\circ$} {\begin{small} denotes the camera centers. \end{small}}
    \label{fig:collinear demonstration}
\end{figure}

Given a set of images, it is well known that the locations can only be uniquely determined up to a global transformation, only when the graph is \textit{parallel rigid}, see \cite{tron2015rigid} and \cite{arrigoni2016camera} for more details on parallel rigidity in computer vision. The most degenerate case is when all of the cameras are collinear, meaning that they lie on a common line. In this case, synchronization algorithms that involve solving the location synchronization problem using solely pairwise directions, will fail. However, collinear or close to collinear situations are common in real life, such as in self-driving cars or robot motion. 
We conduct a small set of synthetic experiments to demonstrate that our algorithm can successfully recover camera poses in the collinear setting. A simple diagram can show the situation, see Figure~\ref{fig:collinear demonstration}.

We randomly generate calibrated collinear cameras. Then, for each individual quadrifocal tensor estimation, we add random noise of a certain percentage to its corresponding cameras and form the corresponding block quadrifocal tensor individually for each quadruplet. The noise parameter corresponds to the amount of noise we add to the cameras. We then randomly sample a certain percentage of quadruplets. We discard all the trifocal tensor and fundamental matrices. The results are then aggregated in Table~\ref{tab:collinear experiments}.
\begin{table}[htpb!]
    \centering
    \begin{tabular}{cccccc}
         \hline
         perc. (\%) & noise(\%) & $\bar{e_t}$ & $\hat{e_t}$ & $\bar{e_r}$ & $\hat{e_r}$\\
         \hline
         100 & 0.00 & 0.00 & 0.00 & 0.00 & 0.00 \\
         100 & 1 & 0.04 & 0.04 & 0.23 & 0.18 \\
         100 & 5 & 0.24 & 0.22 & 2.67 & 2.52 \\
         80 & 0 & 0.00 & 0.00 & 0.00 & 0.00\\
         80 & 1 & 0.04 & 0.03 & 0.37 & 0.37 \\
         80 & 5 & 0.36 & 0.36 & 3.07 & 2.68 \\
         60 & 0 & 0.00 & 0.00 & 0.00 & 0.00\\
         60 & 1 & 0.06 & 0.05 & 0.42 & 0.38 \\
         60 & 5 & 0.64 & 0.54 & 4.52 & 3.67 \\
         \hline
    \end{tabular}
    \caption{Results for synthetic  experiments with collinear cameras. $\bar{e_t}$: mean location error. $\hat{e_t}$: median location error. $\bar{e_r}$: mean rotation error. $\hat{e_r}$: median rotation error. }
    \label{tab:collinear experiments}
\end{table}
Our algorithm can still recover the camera parameters effectively in the collinear setting. Standard pipelines that synchronization rotation and location separately will fail in this situation. Global synchronization algorithms using the block essential matrix and the block trifocal tensors will need special tuning, as the ranks of the entities drop. The block quadrifocal tensor is insensitive to such cases and has a special advantage in the collinear motion setting. 

\subsection{Distributed Synchronization}

Our quadrifocal tensor synchronization algorithm becomes slow as it deals with higher order tensors. However, it can be sped up with distributed synchronization, where clusters are formed and solved in parallel. In our set of experiments, we follow the way we add noise to the cameras in the collinear experiments. We artificially handpick the clusters, so that each cluster is fully dense (we continue to disregard trifocal and fundamental matrix elements). There are overlapping cameras in each cluster, and we finally synchronize all the clusters by calculating the projective transformation that maps the overlapping cameras to be in the same projective frame. The main take away for this experiment is to demonstrate the potential for applicability of the algorithm on large datasets. We conduct sets of experiments with just 2 CPU cores, and show that by focusing on smaller subsets of the data and using parallelization, we can greatly reduce the runtime of our algorithm, making it more scalable and applicable to larger scenes. 

\textbf{Experiment:} We compare just the difference in runtime for a small synthetic dataset. We add no noise and assume that the entire block is observed accurately. We break the entire set of 30 cameras into 3 clusters. The second and first cluster have 5 overlapping cameras, same with the second and third. We then synchronize each cluster separately. The resulting cameras will be in different projective frames. We then use the overlapping cameras to align the clusters to the same projective frame. Noise is added similarly as in the synthetic collinear experiments. We include the results in Table~\ref{tab:distributed synchronization}. 

We also include a small experiment with handpicked clusters for the EPFL CastleP30 dataset. Note that we can not directly synchronization CastleP30 using QuadSync, as the cameras point outwards around a courtyard, so that each camera only sees a small portion of the courtyard, and the higher-order viewing graph is very sparse. However, applying QuadSync on each of the handpick clusters then merging the results produces great reconstruction for the CastleP30 dataset. See Figure \ref{fig:CastleP30_handpicked_clusters} for a comparison of the retrieved poses and ground truth poses. 
\begin{figure}[htp!]
    \centering
    \includegraphics[width=0.6\linewidth]{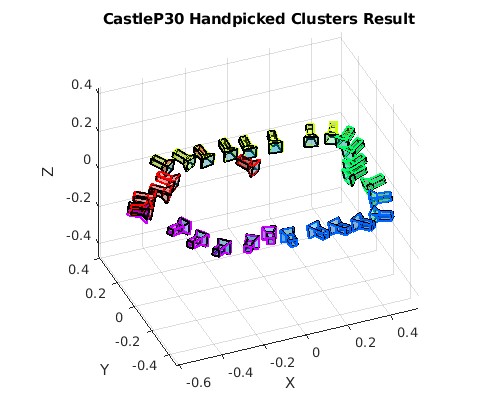}
    \caption{Retrieved poses (colored, where each cluster has a distinct color) vs. ground truth poses (black) for CastleP30 with handpicked clusters.}
    \label{fig:CastleP30_handpicked_clusters}
\end{figure}

\begin{table}[htpb!]
    \centering
    \begin{tabular}{ccccccc}
         \hline
         size & noise & $\bar{e_t}$ & $\hat{e_t}$ & $\bar{e_r}$ & $\hat{e_r}$ & time(s)\\
         \hline
         Full (30 cams) & 0 & 0 & 0 & 0 & 0 & 1666.3\\
         \hline
         Cluster1 (10 cams) & 0 & 0 & 0  & 0 & 0 & 27.3\\
         Cluster2 (15 cams) & 0 & 0 & 0 & 0 & 0 & 92.6 \\
         Cluster3 (15 cams) & 0 & 0 & 0 & 0 & 0 & 97.7 \\
         \hline
         Aligned (30 cams) & 0 & 0 & 0 & 0 & 0 & 150.2 \\
         \hline
         Full (30 cams) & 1 & 0.11 & 0.10 & 0.08 & 0.09 & 1944.8\\
         \hline
         Cluster1 (10 cams) & - & 0.66 & 0.63 & 0.21 & 0.29 & 31.9\\
         Cluster2 (15 cams) & - & 0.31 & 0.32 & 0.18 & 0.14 & 111.5 \\
         Cluster3 (15 cams) & - & 0.31 & 0.31 & 0.22 & 0.21 & 93.4 \\
         \hline
         Aligned (30 cams) & - & 0.53 & 0.44 & 0.35 & 0.32 & 247.5\\
         \hline
          Full (30 cams) & 2 & 0.28 & 0.28 & 0.18 & 0.19 & 1776.0\\
         \hline
         Cluster1 (10 cams) & - & 1.57 & 1.63 & 1.00 & 0.79 & 28.9\\
         Cluster2 (15 cams) & - & 0.77 & 0.76 & 0.41 & 0.36 & 112.2\\
         Cluster3 (15 cams) & - & 0.63 & 0.59 & 0.46 & 0.40 & 92.9 \\
         \hline
         Aligned (30 cams) & - & 1.14 & 0.97 & 0.74 & 0.62 & 245.7\\
         \hline
    
    \end{tabular}
    \caption{Results for synthetic distributed synchronization. $\bar{e_t}$: mean location error. $\hat{e_t}$: median location error. $\bar{e_r}$: mean rotation error. $\hat{e_r}$: median rotation error. Noise is added as percentage onto the cameras. In the noisy cases, only the quadrifocal tensors were synchronized.}
    \label{tab:distributed synchronization}
\end{table}

We note that we added uniform noise to the quadrifocal tensor estimates. Yet, the algorithm performs better in the corrupted setting, where some estimates are very accurate, and a smaller portion is very inaccurate. The main point that we would to demonstrate is that the algorithm can be significantly sped up when the viewing graph has obvious clusters without sacrificing too much accuracy. It is clear that the algorithm can be even further sped up when given more cores and computing resources. 

\section{Detailed Numerical Results}
In this section, we include more comprehensive results accompanying the results in the main paper. We report the completion rates and an idea of the runtime in Table \ref{tab:completion rate}. We also report the detailed mean location error, median location error, mean rotation error, median rotation error in Tables \ref{tab:mean_location_error},  \ref{tab:median_location_error}, \ref{tab:mean_rotation_error},
\ref{tab:median_rotation_error} respectively. Locations are reported in the metrics of their original datasets, and rotations are reported in degrees.

\begin{table*}[htbp]
  \centering
  \caption{Completion rates and runtimes of different methods}
  \label{tab:completion rate}
  \begin{tabular}{@{} l r r r @{}}
    \toprule
    Dataset & Completion \% & Joint Opt. (s) & TrifocalSync (s)\\
    \midrule
    courtyard (18/38) & 33.10 & 111.2 & 137.9\\
    electro (12/39) & 78.70 & 30.6 & 45.13\\
    facade (46/76) & 61.98 & 4226.2 & 1633.0\\
    kicker (20/30) & 31.84 & 157.3 & 174.3\\
    meadow (6/14) & 42.13 & 11.7 & 9.1\\
    office (13/21) & 39.92 & 34.2 & 53.3\\
    pipes (8/14) & 58.79 & 21.6 & 27.2\\
    relief\_2 (16/20) & \textbf{14.31} & 64.9 & 121.6\\
    relief (13/13) & 95.21 & 36.3 & 75.4\\
    terrace (11/23) & 92.53 & 27.5 & 54.8\\
    terrains (23/42) & \textbf{8.29} & 223.1 & 238.5\\
    CastleP19 (13/19) & \textbf{24.46} & 45.2 & 77.24\\
    CastleP30 (19/30) & \textbf{28.02} & 139.7 & 174.7\\
    EntryP10 (10/10) & \textbf{26.50} & 25.0 & 41.8\\
    FountainP11 (11/11) & 76.55 & 22.8 & 47.3\\
    HerzP8 (8/8) & 80.76 & 17.6 & 26.7\\
    HerzP25 (24/25) & \textbf{25.38} &  312.7 & 312.8\\
    \bottomrule
  \end{tabular}
\end{table*}
\begin{table*}[htbp]
  \centering
  \caption{Mean location error by method}
  \label{tab:mean_location_error}
  \begin{tabular}{@{} l r r r r r r r @{}}
    \toprule
    Dataset & QuadSync & \begin{tabular}[c]{@{}r@{}}Joint\\ Opt.\end{tabular} & \begin{tabular}[c]{@{}r@{}}Trifocal\\ Sync\end{tabular} & LUD & NRFM & \begin{tabular}[c]{@{}r@{}}MPLS\\ BATA\end{tabular} & \begin{tabular}[c]{@{}r@{}}MPLS\\ CS\end{tabular} \\
    \midrule
    courtyard (18/38) & 0.0477 & 0.0489 & 0.1753 & 0.0403 & 0.1104 & 0.0711 & \textbf{0.0234}\\
    electro (12/39) & 0.0199 & 0.0200 & \textbf{0.0197} & 0.0605 & 0.0257 & 0.0617 & 0.0604 \\
    facade (46/76) & 3.0294 & 0.0248 & 0.0222 & 0.2497 & 0.2068 & 0.0984 &\textbf{0.0158} \\
    kicker (20/30) & \textbf{0.0078} & \textbf{0.0078} & 0.0204 & 0.0366 & 0.0246 & 0.1934 & 0.0178 \\
    meadow (6/14) & \textbf{0.0401} & 0.1071 & 1.4341 & 0.6408 & 0.2207 & 1.8229 & 0.0691\\
    office (13/21) & 0.0039 & \textbf{0.0037} & 0.0151 & 0.0262 & 0.0187 & 0.0166 & 0.0086 \\
    pipes (8/14) & 0.0192 & 0.0115 & 0.0344 & 0.1097 & 0.1058 & 0.1379 & \textbf{0.0085}  \\
    relief\_2 (16/20) & 0.5210 & 0.5360 & 0.5554 & 0.8624 & 0.8526 & 0.2547 & \textbf{0.0158}\\
    relief (13/13) & \textbf{0.0020} & \textbf{0.0020} & \textbf{0.0020} & 0.0043 & 0.0047 & 0.1076 & 0.0052 \\
    terrace (11/23) & \textbf{0.0096} & \textbf{0.0096} & 0.0103 & 0.0165 & 0.0150 & 0.0163 & 0.0147 \\
    terrains (23/42) & 0.4631 & 0.4335 & 0.0797 & 0.0092 & 0.0149 & 0.0326 & \textbf{0.0075}\\
    CastleP19 (13/19) & 0.5130 & 0.4921 & 7.6932 & 0.2605 & 0.5965 & 1.3871 & \textbf{0.1094} \\
    CastleP30 (19/30) & 0.1629 & 0.1627 & 6.4738 & 0.1132 & 0.1683 & 0.4945 & \textbf{0.0687} \\
    EntryP10 (10/10) & \textbf{0.0007} & \textbf{0.0007} & 0.0654 & 0.0957 & 0.2913 & 0.1235 & 0.0897 \\
    FountainP11 (11/11) & \textbf{0.0002} & \textbf{0.0002} & 0.0098 & 0.0103 & 0.0223 & 0.2125 & 0.0095 \\
    HerzP25 (24/25) & \textbf{0.0025} & 0.0026 & 0.2187 & 0.0647 & 0.1114 & 9.2329 & 0.1668 \\
    HerzP8 (8/8) & \textbf{0.0010} & \textbf{0.0010} & 0.0180 & 0.0522 & 0.2299 & 0.5766 & 0.0375 \\
    \bottomrule
  \end{tabular}
\end{table*}

\begin{table*}[htbp]
  \centering
  \caption{Median location error by method}
  \label{tab:median_location_error}
  \begin{tabular}{@{} l r r r r r r r @{}}
    \toprule
    Dataset & QuadSync & \begin{tabular}[c]{@{}r@{}}Joint\\ Opt.\end{tabular} & \begin{tabular}[c]{@{}r@{}}Trifocal\\ Sync\end{tabular} & LUD & NRFM &\begin{tabular}[c]{@{}r@{}}MPLS\\ BATA\end{tabular}&\begin{tabular}[c]{@{}r@{}}MPLS\\ CS\end{tabular} \\
    \midrule
    courtyard & 0.0307 & 0.0324 & 0.0947 & 0.0308 & 0.0668 & 0.0598 & \textbf{0.0194} \\
    electro & 0.0151 & 0.0152 & 0.0133 & 0.0256 & \textbf{0.0081} & 0.0221 & 0.0181 \\
    facade & 2.5853 & 0.0135 & 0.0140 & \textbf{0.0108} & 0.0124 & 0.0253 & 0.0128 \\
    kicker & 0.0068 & \textbf{0.0063} & 0.0173 & 0.0090 & 0.0110 & 0.0788 & 0.0101 \\
    meadow & 0.0390 & 0.0922 & 1.1006 & 0.2374 & 0.1285 & 0.7635 & \textbf{0.0358} \\
    office & {0.0028} & \textbf{0.0027} & 0.0090 & 0.0035 & 0.0047 & 0.0043 & 0.0033\\
    pipes & 0.0139 & {0.0075} & 0.0212 & 0.0390 & 0.0506 & 0.1330 & \textbf{0.0046}  \\
    relief\_2 & 0.4392 & 0.4737 & 0.5204 & 0.7633 & 0.7852 & 0.2065 & \textbf{0.0072}\\
    relief & \textbf{0.0017} & \textbf{0.0017} & \textbf{0.0017} & 0.0048 & 0.0048 & 0.0052 & 0.0049 \\
    terrace & \textbf{0.0091} &\textbf{ 0.0091} & 0.0104 & 0.0173 & 0.0172 & 0.0164 & 0.0159 \\
    terrains & 0.4442 & 0.4052 & 0.0609 & \textbf{0.0065} & 0.0114 & 0.0165 & 0.0070 \\
    CastleP19 & 0.5502 & 0.4944 & 7.2040 & 0.1694 & 0.4583 & 1.1433 & \textbf{0.0913} \\
    CastleP30 & 0.1145 & 0.1169 & 5.6921 & 0.0698 & 0.1536 & 0.1670 & \textbf{0.0485} \\
    EntryP10 & \textbf{0.0006} & \textbf{0.0006} & 0.0514 & 0.0851 & 0.3107 & 0.1058 & 0.0571 \\
    FountainP11 & \textbf{0.0002} & \textbf{0.0002} & 0.0079 & 0.0086 & 0.0166 & 0.1889 & 0.0085 \\
    HerzP25 & \textbf{0.0009} & 0.0010 & 0.1876 & 0.0463 & 0.0849 & 0.1525 & 0.0608 \\
    HerzP8 & \textbf{0.0007} & \textbf{0.0007} & 0.0147 & 0.0545 & 0.0808 & 0.4254 & 0.0358 \\
    \bottomrule
  \end{tabular}
\end{table*}

\begin{table*}[htbp]
  \centering
  \caption{Mean rotation error by method}
  \label{tab:mean_rotation_error}
  \begin{tabular}{@{} l r r r r r r @{}}
    \toprule
    Dataset & QuadSync & \begin{tabular}[c]{@{}r@{}}Joint\\ Opt.\end{tabular} & \begin{tabular}[c]{@{}r@{}}Trifocal\\ Sync\end{tabular} & LUD & NRFM &\begin{tabular}[c]{@{}r@{}}MPLS\end{tabular} \\
    \midrule
    courtyard & 0.3059 & 0.3295 & 1.2398 & 0.1886 & 0.1886 & \textbf{0.1056} \\
    electro & 0.1022 & 0.1020 & 0.1208 & 0.0722 & 0.0722 & \textbf{0.0603} \\
    facade & 2.1185 & 0.1131 & 0.1239 & \textbf{0.0442} & \textbf{0.0442} & 0.0506 \\
    kicker & 0.2063 & 0.2028 & 0.2986 & \textbf{0.1162} & \textbf{0.1162} & 0.2697\\
    meadow & 0.2856 & 0.6100 & 12.4079 & \textbf{0.1403} & \textbf{0.1403} & 0.1861 \\
    office & 0.1653 & 0.1449 & 0.5749 & 0.0674 & 0.0674 & \textbf{0.0547} \\
    pipes & 0.5901 & 0.1785 & 3.0799 & 0.1516 & 0.1516 & 0.1546 \\
    relief\_2 & 32.6666 & 49.3035 & 52.5503 & 45.0175 & 45.0175 & \textbf{0.1049} \\
    relief & \textbf{0.0791} & \textbf{0.0791} & 0.0792 & 0.1080 & 0.1080 & 0.1077 \\
    terrace & 0.0796 & 0.0796 & 0.0812 & 0.0728 & 0.0728 & \textbf{0.0693} \\
    terrains & 11.5185 & 11.1041 & 3.7660 & 0.2027 & 0.2027 & \textbf{0.1918} \\
    CastleP19 & 30.4922 & 31.9495 & 39.5514 & 0.7604 & 0.7604 & \textbf{0.2795} \\
    CastleP30 & 15.8584 & 14.8579 & 10.6018 & 0.1937 & 0.1937 & \textbf{0.1168}  \\
    EntryP10 & \textbf{0.0822} & \textbf{0.0822} & 0.1489 & 0.2993 & 0.2993 & 0.3086 \\
    FountainP11 & 0.0734 & 0.0734 & 0.0912 & 0.0517 & 0.0517 & \textbf{0.0508} \\
    HerzP25 & \textbf{0.2271} & 0.2300 & 1.1558 & 0.2879 & 0.2879 & 0.7712  \\
    HerzP8 & \textbf{0.0997} & \textbf{0.0997} & 0.1167 & 0.4717 & 0.4717 & 0.3462 \\
    \bottomrule
  \end{tabular}
\end{table*}

\begin{table*}[htbp]
  \centering
  \caption{Median rotation error by method}
  \label{tab:median_rotation_error}
  \begin{tabular}{@{} l r r r r r r @{}}
    \toprule
    Dataset & QuadSync & \begin{tabular}[c]{@{}r@{}}Joint\\ Opt.\end{tabular} & \begin{tabular}[c]{@{}r@{}}Trifocal\\ Sync\end{tabular} & LUD & NRFM & \begin{tabular}[c]{@{}r@{}}MPLS\end{tabular} \\
    \midrule
    courtyard & 0.2144 & 0.2283 & 1.2651 & 0.1909 & 0.1909 & \textbf{0.1234} \\
    electro & 0.0883 & 0.0888 & 0.1186 & \textbf{0.0630} & \textbf{0.0630} & 0.0638\\
    facade & 0.9737 & 0.0731 & 0.0812 & \textbf{0.0449} & \textbf{0.0449} & 0.0457 \\
    kicker & 0.1735 & 0.1701 & 0.2877 & \textbf{0.1146} & \textbf{0.1146} & 0.1650 \\
    meadow & 0.2002 & 0.5049 & 7.3067 & \textbf{0.0902} & \textbf{0.0902} & 0.1337 \\
    office & 0.0994 & 0.1070 & 0.2807 & 0.0717 & 0.0717 & \textbf{0.0703} \\
    pipes & 0.2937 & \textbf{0.1131} & 1.7915 & 0.1310 & 0.1310 & 0.1312 \\
    relief\_2 & 7.9110 & 12.0705 & 19.4790 & 38.7239 & 38.7239 & \textbf{0.1039} \\
    relief & 0.0880 & 0.0880 & \textbf{0.0849} & 0.1173 & 0.1173 & 0.1157 \\
    terrace & \textbf{0.0860} & \textbf{0.0860} & 0.0885 & 0.0962 & 0.0962 & 0.0916 \\
    terrains & 2.6535 & 2.3353 & 2.6054 & \textbf{0.2280} & \textbf{0.2280} & 0.2567\\
    CastleP19 & 6.7486 & 8.3131 & 7.7256 & 0.5380 & 0.5380 & \textbf{0.2186}  \\
    CastleP30 & 7.9894 & 7.8164 & 4.2029 & 0.1520 & 0.1520 & \textbf{0.0903}  \\
    EntryP10 & \textbf{0.0810} & \textbf{0.0810} & 0.1173 & 0.2840 & 0.2840 & 0.2375  \\
    FountainP11 & 0.0724 & 0.0724 & 0.0884 & 0.0559 & 0.0559 & \textbf{0.0448} \\
    HerzP25 & \textbf{0.1500} & 0.1584 & 0.7872 & 0.2564 & 0.2564 & 0.4064 \\
    HerzP8 & \textbf{0.0813} & \textbf{0.0813} & 0.1107 & 0.4824 & 0.4824 & 0.3354 \\
    \bottomrule
  \end{tabular}
\end{table*}

\end{document}